\newif\iftechreport 
\def\evi{\mathrm{evi}}
\def\pevi{\mathrm{pev}}
\def\aevi{\mathrm{aev}}
\def\cau{\mathrm{cau}}
\def\doo{{\mathtt{do}}} 
\def\pa{{\mathit{pa}}}  
\def\S{\mathcal{S}}     
\tikzstyle{c} = [circle, draw, text centered, minimum height=2em]
\def\eps{\varepsilon}
\def\exampledivider{\noindent\rule{\textwidth}{1pt}}
\newcommand{\appendixref}[1]{{\hyperref[#1]{Appendix~\ref*{#1}}}}
\title{Sequential Extensions of Causal and Evidential Decision Theory%
\footnote{The final publication is available at \url{http://link.springer.com/}.}}
\author{Tom Everitt \and Jan Leike \and Marcus Hutter}
\date{\today}
\begin{document}

\maketitle

\begin{abstract}%
Moving beyond the dualistic view in AI
where agent and environment are separated
incurs new challenges for decision making,
as calculation of expected utility is no longer straightforward.
The non-dualistic decision theory literature is split between
\emph{causal decision theory} and \emph{evidential decision theory}.
We extend these decision algorithms to the \emph{sequential} setting
where the agent alternates between taking actions
and observing their consequences.
We find that
evidential decision theory has two natural extensions while
causal decision theory only has one.
\end{abstract}

\paragraph{Keywords.}
Evidential decision theory,
causal decision theory,
causal graphical models,
planning,
dualism,
physicalism.

\section{Introduction}

In artificial-intelligence problems
an agent interacts sequentially with an environment
by taking actions and receiving percepts~\cite{RN:2010}.
This model is \emph{dualistic}:
the agent is distinct from the environment.
It influences the environment only through its actions,
and the environment has no other information about the agent.
The dualism assumption is accurate for
an algorithm that is playing chess, go, or other (video) games,
which explains why it is ubiquitous in AI research.
But often it is not true:
real-world agents are embedded in (and computed by) the environment~\cite{OR:2012},
and then a \emph{physicalistic model}\footnote{%
Some authors also call this type of model \emph{materialistic} or \emph{naturalistic}.
} is more appropriate.

This distinction becomes relevant in multi-agent settings with similar agents,
where each agent encounters `echoes' of its own decision making.
If the other agents are running the same source code,
then the agents' decisions are logically connected.
This link can be used for uncoordinated cooperation ~\cite{LFYBCH:2014}.
Moreover, a physicalistic model is indispensable for self-reflection.
If the agent is required to autonomously verify its integrity,
and perform maintenance, repair, or upgrades,
then the agent needs to be aware of its own functioning.
For this, a reliable and accurate self-modeling is essential.
Today, applications of this level of autonomy are mostly restricted
to space probes distant from earth or robots navigating lethal situations,
but in the future this might also become crucial for
sustained self-improvement in generally intelligent agents%
~\cite{Yudkowsky:2008xrisk,Bostrom:2014,MIRI:2014agenda,FLI:2015}.

\begin{figure}[t]
\begin{center}
\begin{tikzpicture}[scale=0.25] 
\draw (0,0) -- (33,0) -- (33,12) -- (0,12) -- (0,0);
\node[above left] at (33,0) {environment};

\node[rectangle,draw,below left] at (32.25,11) {hidden state $s$};

\draw (0.75,1) -- (18,1) -- (18,11) -- (0.75,11) -- (0.75,1);
\draw[->] (18,6.5) to node[above] {$a_t$} (23,6.5);
\draw[<-] (18,5.5) to node[below] {$e_t$} (23,5.5);
\node[above left] at (18,1) {agent $\pi$};

\draw (1.5,4) -- (15.5,4) -- (15.5,10) -- (1.5,10) -- (1.5,4);
\node[above left] at (15.5,4) {environment model $\mu$};

\node[rectangle,draw,below right] at (2.25,9) {self-model};
\end{tikzpicture}
\end{center}
\caption{
The physicalistic model.
The hidden state $s$ contains information about the agent that is unknown to it.
The distribution $\mu$ is the agent's (subjective) \emph{environment model},
and $\pi$ its (deterministic) policy.
The agent models itself through the
beliefs about (future) actions given by its environment model $\mu$.
Interaction with the environment at time step $t$ occurs through
an action $a_t$ chosen by the agent
and a percept $e_t$ returned by the environment.
}
\label{fig:physicalistic-agent-model}
\end{figure}
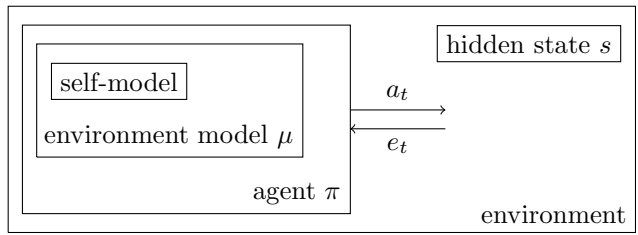

In the physicalistic model
the agent is embedded inside the \emph{environment},
as depicted in \autoref{fig:physicalistic-agent-model}.
The environment has a \emph{hidden state} that
contains information about the agent
that is inaccessible to the agent itself.
The agent has an \emph{environment model}
that describes the behavior of the environment given the hidden state
and includes beliefs about the agent's own future actions (thus modeling itself).

Physicalistic agents may view their actions in two ways:
as their selected output,
and as consequences of properties of the environment.
This leads to significantly more complex problems of inference and decision making,
with actions simultaneously being both means to influence the environment
and evidence about it.
For example, looking at cat pictures online
may simultaneously be a \emph{means} of procrastination, and
\emph{evidence} of bad air quality in the room.

Dualistic decision making in a known environment is straightforward
calculation of expected utilities.
This is known as Savage decision theory~\cite{Savage:1972}.
For non-dualistic decision making
two main approaches are offered by the decision theory literature:
\emph{causal decision theory}~(CDT)%
~\cite{GH:1978,Lewis:1981,Skyrms:1982,Joyce:1999,SEP:CDT} and
\emph{evidential decision theory}~(EDT)~\cite{Jeffrey:1983, SEP:EDT,Ahmed:2014}.
EDT and CDT both take actions that maximize expected utility,
but differ in the way this expectation is computed:
EDT uses the action under consideration as evidence about the environment
while CDT does not.
\autoref{sec:dt} formally introduces these decision algorithms.

Our contribution is to formalize and explore a decision-theoretic
setting with a physicalistic reinforcement learning agent
interacting \emph{sequentially} with an environment that it is embedded in
(\autoref{sec:informed-decisions}).
Previous work on non-dualistic decision theories has focused on
\emph{one-shot} situations.
We find that there are two natural extensions of EDT to the sequential
case, depending on whether the agent updates beliefs based on its next
action or its entire policy. CDT has only one natural extension.
We extend two famous \emph{Newcomblike problems} to the sequential setting
to illustrate the differences between our
(generalized) decision theories.

\autoref{sec:discussion} summarizes our results and outlines future
directions.
A list of notation can be found on \hyperref[app:notation]{page~\pageref*{app:notation}}
\iftechreport
and \appendixref{app:examples} contains formal details to our examples.
\else
and the formal details of the examples can be found in
the technical report~\cite{ELH:2015tech}.
\fi

\section{One-Shot Decision Making}
\label{sec:dt}

In a \emph{one-shot decision problem},
we take one \emph{action $a \in \A$}, receive a \emph{percept $e \in \E$}
(typically called \emph{outcome} in the decision theory literature)
and get a \emph{payoff $u(e)$} according to the 
\emph{utility function $u: \E \to [0, 1]$}.
We assume that the set of actions $\A$ and the set of percepts $\E$ are finite.
Additionally,
the environment contains a \emph{hidden state} $s \in \S$.
The hidden state holds information that is inaccessible to the agent
at the time of the decision,
but may influence the decision and the percept.
Formally, the environment is given by
a probability distribution $P$ over the hidden state, the action, and the percept
that factors according to a causal graph~\cite{Pearl:2009}.

A \emph{causal graph} over the random variables $x_1,\dots,x_n$ is
a directed acyclic graph with nodes $x_1,\dots,x_n$.
To each node $x_i$ belongs a
probability distribution $P(x_i\mid \pa_i)$,
where $\pa_i$ is the set of parents of $x_i$ in the graph.
It is natural to identify the causal graph with 
the factored distribution $P(x_1,\dots, x_n)=\prod_{i=1}^nP(x_i\mid \pa_i)$.
Given such a causal graph/factored distribution, 
we define the \emph{$\doo$-operator} as
\begin{equation}\label{eq:def-do}
  P(x_1, \ldots, x_{j-1}, x_{j+1},\ldots, x_n \mid \doo(x_j := b))
= \prod_{\substack{i=1\\i\not=j}}^n P(x_i \mid \pa_i)
\end{equation}
where $x_j$ is set to $b$ wherever it occurs in $\pa_i$, $1\leq i\leq n$.
The result is a new probability distribution
that can be marginalized and conditioned in the standard way.
Intuitively, intervening on node $x_j$ means
ignoring all incoming arrows to $x_j$, as the effects they represent
are no longer relevant when we intervene;
the factor $P(x_j\mid \pa_j)$ representing the ingoing influences
to $x_j$ is therefore removed in the right-hand side of \eqref{eq:def-do}.
Note that the $\doo$-operator is only defined for distributions
for which a causal graph has been specified.
See \cite[Ch.\ 3.4]{Pearl:2009} for details.

\subsection{Savage Decision Theory}
\label{sec:sdt}

In the \emph{dualistic} formulation of decision theory,
we have a function $P$ that takes an action $a$ and returns
a probability distribution $P_a$ over percepts.
\emph{Savage decision theory}~(SDT)~\cite{Savage:1972,SEP:EDT}
takes actions according to
\begin{equation}\label{eq:SDT}
\argmax_{a \in \A} \sum_{e \in \E} P_a(e) u(e).
\tag{SDT}
\end{equation}

\begin{figure}[t]
\begin{center}
\begin{tikzpicture}[node distance=20mm, auto]
\node[c] (a1) {$a$};
\node[c,right of=a1] (e1) {$e$};
\node[c,above of=a1] (hidden) {$s$};
\draw[->] (hidden) to (a1);
\draw[->] (hidden) to (e1);
\draw[->] (a1) to (e1);
\end{tikzpicture}
\end{center}
\caption{
The causal graph $P(s,a,e)=P(s)P(a\mid s)P(e\mid s,a)$ for one-step decision making.
The hidden state $s$ influences both the decision maker's action $a$
and the received percept $e$.
}\label{fig:one-step-model}
\end{figure}
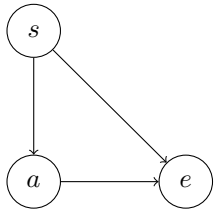

In the dualistic model it is usually conceptually clear what $P_a$ should be.
In the physicalistic model
the environment model takes the form of a causal graph
over a hidden state $s$, action $a$, and percept $e$,
as illustrated in \autoref{fig:one-step-model}.
According to this causal graph,
the probability distribution $P$ factors causally into
$P(s, a, e) = P(s) P(a \mid s) P(e \mid s, a)$.
The hidden state is not independent of the decision maker’s action and
Savage's model is not directly applicable
since we do not have a specification of $P_a$.
How should decisions be made in this context?
The literature focuses on two answers to this question:
CDT and EDT.

\subsection{Causal and Evidential Decision Theory}
\label{sec:cdt-and-edt}

The literature on causal and evidential decision theory is vast, and
we give only a very superficial overview
that is intended to bring the reader up to speed on the basics.
See \cite{SEP:EDT,SEP:CDT} and references therein
for more detailed introductions.

\emph{Evidential decision theory} (endorsed in \cite{Jeffrey:1983, Ahmed:2014})
considers the probability of the percept $e$ \emph{conditional on}
taking the action $a$:
\begin{equation}\label{eq:EDT}
\argmax_{a \in \A} \sum_{e \in \E} P(e \mid a)\, u(e)
\quad\text{with}\quad
P(e \mid a) = \sum_{s \in \S} P(e \mid s, a) P(s \mid a)
\tag{EDT}
\end{equation}

\emph{Causal decision theory} has several formulations~\cite{GH:1978,Lewis:1981,Skyrms:1982,Joyce:1999};
we use the one given in \cite{Skyrms:1982},
with Pearl's calculus of causality~\cite{Pearl:2009}.
According to CDT, the probability of a percept $e$ is given by
the \emph{causal intervention} of performing action $a$
on the causal graph from \autoref{fig:one-step-model}:
\begin{equation}\label{eq:CDT}
\argmax_{a \in \A} \sum_{e \in \E} P(e \mid \doo(a))\, u(e)
\quad\text{with}\quad
P(e \mid \doo(a)) = \sum_{s \in \S} P(e \mid s, a) P(s)
\tag{CDT}
\end{equation}
where  $P(e\mid \doo(a))$ follows from \eqref{eq:def-do} and marginalization over $s$.

The difference between CDT and EDT is how the action affects
the belief about the hidden state.
EDT assigns credence $P(s\mid a)$ to the hidden state $s$ if action $a$
is taken, while CDT assigns credence $P(s)$.
A common argument for CDT is that an action under my direct control should
not influence my belief about things that are not causally affected by the action.
Hence $P(s)$ should be my belief in $s$, and not $P(s\mid a)$.
(By assumption, the action does not \emph{causally} affect the hidden state.)
EDT might reply that if action $a$ does not have the same likelihood under
all hidden states $s$, then action $a$ should indeed inform me about 
the hidden state, regardless of causal connection.
The following two classical examples from the decision theory literature
describe situations where CDT and EDT disagree.
\iftechreport
A formal definition of these examples can be found in
\appendixref{app:examples}.
\else
A formal definition of these examples can be found in
the technical report~\cite{ELH:2015tech}.
\fi

\begin{example}[{Newcomb's Problem~\cite{Nozick:1969}}]
\label{ex:Newcomb}
In Newcomb's Problem there are two boxes:
an opaque box that is either empty or contains one million dollars and
a transparent box that contains one thousand dollars.
The agent can choose between taking only the opaque box (`one-boxing')
and taking both boxes (`two-boxing').
The content of the opaque box is determined by a prediction
about the agent's action by a very reliable predictor:
if the agent is predicted to one-box, the box contains the million, and
if the agent is predicted to two-box, the box is empty.
In Newcomb's problem
EDT prescribes one-boxing because one-boxing is evidence that
the box contains a million dollars.
In contrast,
CDT prescribes two-boxing because two-boxing dominates one-boxing:
in either case we are a thousand dollars richer,
and our decision cannot causally affect the prediction.
Newcomb's problem has been raised as a critique to CDT,
but many philosophers insist that two-boxing is in fact
the rational choice,%
\footnote{%
In a 2009 survey,
31.4\% of philosophers favored two-boxing,
and 21.3\% favored one-boxing (931 responses);
see \url{http://philpapers.org/surveys/results.pl}.
Is that the reason there are so few wealthy philosophers?}
even if it means you end up poor.

Note how the decision depends on whether the action influences the belief
about the hidden state (the contents of the opaque box) or not.
\end{example}

Newcomb's problem may appear as
an unrealistic thought experiment.
However, we argue that problems with similar structure are fairly common.
The main structural requirement is that $P(s \mid a) \neq P(s)$
for some state or event $s$ that is not causally affected by $a$.
In Newcomb's problem the predictor's ability to guess the action
induces an `information link' between actions and hidden states.
If the stakes are high enough, the predictor does not have to be
much better than random in order to generate a
\emph{Newcomblike decision problem}.
Consider for example
spouses predicting the faithfulness of their partners,
employers predicting the trustworthiness of their employees, or
parents predicting their children's intentions.
For AIs, the potential for accurate predictions is even greater,
as the predictor may have access to the AI's source code.
Although rarely perfect,
all of these predictions are often substantially better than random.

To counteract the impression that EDT is generally superior to CDT,
we also discuss the \emph{toxoplasmosis problem}.

\begin{example}[{Toxoplasmosis Problem~\cite{Altair:2013}}]\!\!\!\footnote{%
Historically, this problem has been known as
the \emph{smoking lesion problem}~\cite{Egan:2007}.
We consider the smoking lesion formulation confusing,
because today it is universally known that
smoking \emph{does} cause lung cancer.%
}
\label{ex:toxoplasmosis}
This problem takes place in a world in which there is a certain parasite that
causes its hosts to be attracted to cats,
in addition to uncomfortable side effects.
The agent is handed an adorable little kitten and
is faced with the decision of whether or not to pet it.
Petting the kitten feels nice and
therefore yields more utility than not petting it.
However, people suffering from the parasite are more likely to pet the kitten.
Petting the kitten is evidence of having the parasite, so EDT recommends against it.
CDT correctly observes that petting the kitten does not \emph{cause}
the parasite,
and is therefore in favor of petting.
\end{example}

\hyperref[ex:Newcomb]{Newcomb's problem} and
the \hyperref[ex:toxoplasmosis]{toxoplasmosis problem}
cannot be properly formalized in SDT,
because SDT requires the percept-probabilities $P_a$ to be specified,
but it is not clear what the right choice of $P_a$ would be.
However, both CDT and EDT can be recast in the context of \ref{eq:SDT}
by setting $P_a$ to be
$P(\,\cdot \mid \doo(a))$ and $P(\,\cdot \mid a)$ respectively.
Thus we could say that
the formulation given by Savage needs a specification of the environment
that tells us whether to act evidentially, causally, or otherwise.

\section{Sequential Decision Making}
\label{sec:informed-decisions}

In this section we extend CDT and EDT to the sequential case.
We start by formally specifying the physicalistic model depicted 
in \autoref{fig:physicalistic-agent-model} in the first subsection,
and discuss problems with time consistency in \autoref{ssec:time-consistency},
before defining the extensions proper in \autoref{ssec:sedt} and
\ref{ssec:scdt}.
The \hyperref[sec:hidden-states]{final subsection} dissects the role
of the hidden state.

\subsection{The Physicalistic Model}
\label{sec:physicalistic-model}

For the remainder of this paper, we assume that the agent interacts
sequentially with an environment.
At time step $t$ the agent chooses an \emph{action} $a_t \in \A$ and
receives a \emph{percept} $e_t \in \E$
which yields a \emph{utility} of $u(e_t) \in \mathbb{R}$;
the cycle then repeats for $t + 1$.
A \emph{history} is an element of $\H$.
We use $\ae \in \A \times \E$ to denote one interaction cycle,
and $\ae_{<t}$ to denote a history of length $t - 1$.
The percepts between time $t$ and time $m$ are denoted $e_{t:m}$.
A \emph{policy} is a function that maps a history $\ae_{<t}$ to
the next action $a_t$.
We only consider deterministic policies.

\begin{figure}[t]
\begin{center}
\begin{tikzpicture}[node distance=20mm, auto]
\node[c] (a1) {$a_1$};
\node[c,right of=a1] (e1) {$e_1$};
\node[c,right of=e1] (a2) {$a_2$};
\node[c,right of=a2] (e2) {$e_2$};
\node[right of=e2,minimum height=2em] (dots) {\ldots};
\node[c,above of=a1] (hidden) {$s$};
\draw[->] (hidden) to (a1);
\draw[->] (hidden) to (e1);
\draw[->] (hidden) to (a2);
\draw[->] (hidden) to (e2);
\draw[->] (hidden) to (dots);
\draw[->] (a1) to (e1);
\draw[->,bend right] (a1) to (a2);
\draw[->,bend right] (a1) to (e2);
\draw[->,bend right] (a1) to (dots);
\draw[->] (e1) to (a2);
\draw[->,bend left] (e1) to (e2);
\draw[->,bend left] (e1) to (dots);
\draw[->] (a2) to (e2);
\draw[->,bend right] (a2) to (dots);
\draw[->] (e2) to (dots);
\end{tikzpicture}
\end{center}
\vspace{-7mm}
\caption{
The (infinite) causal graph for a sequential environment.
Each action $a_t$ and each percept $e_t$ is represented by a node in the causal graph.
Actions and percepts affect all subsequent actions and percepts:
causality follows time.
The hidden state $s$ is only ever indirectly (partially) observed.
}\label{fig:model-with-hidden-state}
\end{figure}
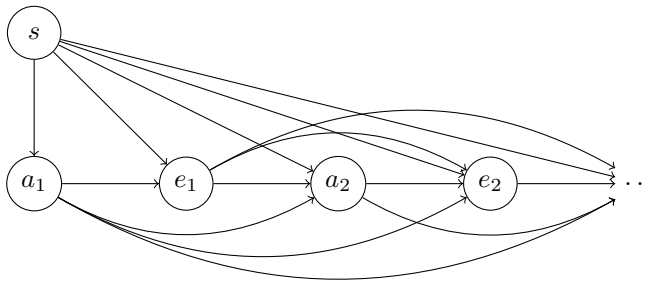

We assume that the agent is given an environment model $\mu$,
but knows neither the hidden state $s$ nor
its own future actions.
The unknown hidden state may influence both percepts and actions.
Actions and percepts in turn influence the entire future.
The environment model $\mu$ is given by
a probability distribution over hidden states and histories
that factors as
\begin{equation}\label{eq:mu-factorized}
  \mu(s, \ae_{<t})
= \mu(s) \prod_{i=1}^{t-1} \mu(a_i \mid s, \ae_{<i}) \mu(e_i \mid s, \ae_{<i}a_i)
\end{equation}
for any $t \in \mathbb N$.
While such a factorization is possible for any distribution,
we additionally demand that this factorization is \emph{causal}
according to the causal graph in \autoref{fig:model-with-hidden-state}.
The distribution $\mu(a_t\mid s,\ae_{<t})$ gives the likelihood of the agent's 
own actions provided a hidden state $s\in\S$
(for example, the prior probability of an infected agent petting the
kitten in the toxoplasmosis problem above). 
For technical reasons, this distribution must always leave some uncertainty about the actions:
if the environment model assigned probability zero for an action $a'$,
the agent could not deliberate taking action $a'$
since $a'$ could not be conditioned on.
Formally, we require $\mu(\,\cdot\mid s)$ to be \emph{action-positive} for all $s\in\S$:
\begin{equation}\label{eq:action-positive}
\forall \ae_{<t}a_t \in \H \times \A.\;
  \big( \mu(\ae_{<t}\mid s) > 0 \implies \mu(a_t \mid s, \ae_{<t}) > 0 \big)
\end{equation}

The distribution $\mu$ is a \emph{model} of the environment,
a belief held by the agent,
but not the distribution from which the actual history is drawn.
The actual history is distributed according to the true environment
distribution.
Because the environment contains the agent,
the agent's algorithm might get modified by it and
the actions that the agent actually ends up taking
might not be the actions that were planned.
In the end, model and reality will disagree:
for example, we simultaneously assume the agent's
policy $\pi$ to be deterministic
and the environment model to be action positive. 
Nevertheless, we assume
the given environment model is \emph{accurate} in the sense that
it faithfully represents the environment in the ways relevant to the agent.
In other words, we are interested in problems that arise during planning,
not problems that arise due to poor modeling.

\subsection{Time Consistency}
\label{ssec:time-consistency}

When planning for the infinite future
we need to make sure that utilities do not sum to infinity;
typically this is achieved with discounting.
Here, we simplify by fixing a finite $m \in \mathbb{N}$
to be the agent's \emph{lifetime}:
the agent cares about the sum of the utilities of all percepts
$e_1 \ldots e_m$ until and including time step $m$,
but does not care what happens after that
(presumably the agent is then retired).

In sequential decision theory
we need to plan the next $m-t$ actions in time step $t$.
We plan what we would do for all possible future percepts $e_{t:m}$
by choosing a policy $\pi: \H \to \A$ that specifies which action we take
depending on how the history plays out.
For example, we take action $a_t$,
and when we subsequently receive the percept $e_t$,
we plan to take action $a_{t+1}$.
Problems arise once we get to the next step and
even tough we \emph{did} take action $a_t$ and
the percept \emph{did} turn out to be $e_t$,
we change our mind and take a different action $\hat{a}_{t+1}$.
This is called \emph{time inconsistency}.
Time inconsistency is an artifact of bad planning
since the agent incorrectly anticipates her own actions.
The choice of discounting can lead to time inconsistency:
a sliding fixed-size horizon is time inconsistent,
but a fixed finite lifetime is time consistent~\cite{LH:2014discounting}.

We achieve time consistency by using a fixed finite lifetime, and
by calculating decisions recursively using value functions.
A \emph{value function $V_{\mu,m}^\pi$} is
a function of type $(\H \cup (\H \times \A)) \to \mathbb{R}$.
It gives an estimate of future reward: $V_{\mu,m}^\pi(\ae_{<t})$ and
$V_{\mu,m}^\pi(\ae_{<t}a_t)$ are estimates of
how much reward the policy $\pi$ will
obtain in environment $\mu$ within
lifetime $m$ subsequent to history $\ae_{<t}$ and $\ae_{<t}a_t$ respectively.
For any history $\ae_{<t}$,
we define $V_{\mu,m}^\pi(\ae_{<t}) := V_{\mu,m}^\pi(\ae_{<t}\pi(\ae_{<t}))$.
We say that a policy $\pi$ is
\emph{optimal and time consistent for the value function $V_{\mu,m}$} iff
$\pi(\ae_{<t})=\arg\max_a V_{\mu,m}^\pi(\ae_{<t} a)$
for all histories $\ae_{<t} \in (\A \times \E)^{t-1}$ and all $t \leq m$.

\subsection{Sequential Evidential Decision Theory}
\label{ssec:sedt}

Evidential decision theory assigns probability $P(e \mid a)$
to action $a$ resulting in percept $e$ (\autoref{sec:cdt-and-edt}). 
There are two ways to generalize this to the sequential setting,
depending on whether we use only the next action or the whole future policy
as evidence for the next percept.

\begin{definition}[Action-Evidential Decision Theory]
\label{def:action-evidential-value}
The \emph{action-evidential value of a policy $\pi$
with lifetime $m$ in environment $\mu$
given history $\ae_{<t}a_t$} is
\begin{equation}
   V^{\aevi,\pi}_{\mu,m}(\ae_{<t}a_t)
:= \sum_{e_t}\mu(e_t \mid \ae_{<t}a_t)
     \Big( u(e_t) + V^{\aevi,\pi}_{\mu,m}(\ae_{<t}a_te_t) \Big)
\tag{SAEDT}\label{eq:SAEDT}
\end{equation}
and $V^{\aevi,\pi}_{\mu,m}(\ae_{<t}a_t) := 0$ for $t > m$.
\emph{Sequential Action-Evidential Decision Theory (SAEDT)} prescribes
adopting an optimal and time consistent policy $\pi$ for $V^{\aevi}_{\mu,m}$.
\end{definition}

It may be argued that \ref{eq:SAEDT}
does not take all available (deliberative) information into account.
When considering the consequences of an action, future developments
of the environment-policy interactions could also be used as evidence.
That is, we could condition not only on the next action,
but on the future policy as a whole (within the lifetime).
In order to define conditional probabilities with respect to (deterministic) 
policies, we define the following events. 
For a given policy $\pi$,
let $\Pi_{t:m}$ be the set of all strings consistent with $\pi$
between time step $t$ and $m$:
\[
\Pi_{t:m} := \{ \ae_{1:\infty} \mid \forall t \leq i \leq m.\; \pi(\ae_{<i}) = a_i \}
\]
The likelihood of a next percept $e_t$ provided a history $\ae_{<t}$
and a (future) policy $\pi$ followed from time step $t$ until lifetime $m$ 
(denoted $\pi_{t:m}$) is then defined as
\begin{equation}
\mu(e_t\mid \ae_{<t},\pi_{t:m}) := \mu(e_t\mid \ae_{<t}\cap \Pi_{t:m}).
\label{eq:policy-cond}
\end{equation}
This is an \emph{atemporal} conditional because we are conditioning on
future actions up until the end of the agent's lifetime.
The conditional \eqref{eq:policy-cond} is well-defined
because we only take the actions from time step $t$ to $m$ into account;
conditioning on policies with infinite lifetime
leads to technical problems
because such policies typically have $\mu$-measure zero.

\begin{definition}[Policy-Evidential Decision Theory]
\label{def:policy-evidential-value}
The \emph{policy-evidential value of a policy $\pi$
with lifetime $m$ in environment $\mu$
given history $\ae_{<t}a_t$} is
\begin{equation}
   V^{\pevi,\pi}_{\mu,m}(\ae_{<t}a_t)
:= \sum_{e_t}\mu(e_t \mid \ae_{<t}a_t, \pi_{t+1:m})\cdot
     \Big( u(e_t) + V^{\pevi,\pi}_{\mu,m}(\ae_{<t}a_te_t) \Big)
\tag{SPEDT}\label{eq:SPEDT}
\end{equation}
and $V^{\pevi,\pi}_{\mu,m}(\ae_{<t}) := 0$ for $t > m$.
\emph{Sequential Policy-Evidential Decision Theory (SPEDT)} prescribes
adopting an optimal and time consistent policy $\pi$ for $V^{\pevi}_{\mu,m}$.
\end{definition}
For one-step decisions ($m = t+1$), SAEDT and SPEDT coincide.

To all our embedded agents, past actions constitute evidence about 
the hidden state.
For evidential agents, this principle is extended to future actions.
SAEDT and SPEDT differ in how far they extend it.
The action-evidential agent only updates his belief on
the action about to take place.
In that sense, he only updates his belief about the next
percept on events taking place \emph{before} this percept.
The policy-evidential agent takes the principle much further,
using ``thought-experiments'' of what action he \emph{would take
in hypothetical situations}, most of which will never be realized.
This is illustrated in the next example.

\begin{example}[Sequential Toxoplasmosis]
\label{ex:sequential-toxoplasmosis}
In our sequential variation of
the \hyperref[ex:toxoplasmosis]{toxoplasmosis problem}
the agent has some probability of encountering a kitten.
Additionally, the agent has the option of seeing a doctor (for a fee) and
getting tested for the parasite, which can then be safely removed.
In the very beginning, an SPEDT agent updates his belief on the fact that
if he encountered a kitten, he would not pet it,
which lowers the probability that he has the parasite
and makes seeing the doctor unattractive.
An SAEDT agent only updates his belief about the parasite
when he actually encounters a kitten,
and thus prefers seeing the doctor. See \autoref{fig:seq-toxo} for
more details and a graphical illustration.
\end{example}

\begin{figure}[t]
\centering
\begin{tikzpicture}[
	grow = right,  
	level 1/.style = {sibling distance=35mm,level distance=10mm},
	level 2/.style = {sibling distance=15mm,level distance=21mm},
	level 3/.style = {sibling distance=10mm},
	level 5/.style = {level distance=10mm},
]
\node (init) {}
child {
	node (I) {Healthy}
	child {
		node (IE) {No doc}
		child {
			node (IEK) {Kitten (0)}
                        child {
				node (IEKQ) {Not pet}
				child { node[right] {Healthy, not pet (0)} }
			}
			child {
				node (IEKP) {Pet}
				child { node[right] {Healthy, pet (1)} }
			}
		}
	} child {
		node (ID) {Doc}
		child {
		 	node (ID0) {Healthy ($-4$)}
		}
	}
}
child {
	node (J) {Toxo}
	child {
		node (JE) {No doc}
		      child {
			  node (JEK) {Kitten (0)}
				child {
                                    node(JEKQ) {Not pet}
                                    child { node[right] {Sick, not pet ($-10$)} }
                                 }
                                child {
                                    node(JEKP) {Pet}
                                    child { node[right] {Sick, pet ($-9$)} }
                                    }
			} child {
				node (JES) {Sick ($-10$)}
			}
	} child {
		node (JD) {Doc}
		child {
			node (JD0) {Cured ($-4$)}
		}
	}
};
\path (init) to node[left] {0.5} (J);
\path (init) to node[left] {0.5} (I);
\path (J) to node[above] {0.5} (JD);
\path (J) to node[below] {0.5} (JE);
\path (I) to node[above] {0.5} (ID);
\path (I) to node[below] {0.5} (IE);
\path (JE) to node[below] {$0.2$} (JEK);
\path (JE) to node[above] {$0.8$} (JES);
\path (JEK) to node[above] {$0.8$}      (JEKP);
\path (JEK) to node[below] {$0.2$}      (JEKQ);
\path (IEK) to node[above] {$0.2$}      (IEKP);
\path (IEK) to node[below] {$0.8$}      (IEKQ);
\draw[dashed](I)to(J);
\draw[dashed, bend right](IE)to(JE);
\draw[dashed, bend left](IEK)to(JEK);
\draw[dashed, bend left](IEKP)to(JEKP);
\draw[dashed, bend right](IEKQ)to(JEKQ);
\end{tikzpicture}
\caption{%
One formalization of the sequential toxoplasmosis problem.
Dashed lines connect states indistinguishable to the agent.
The numbers on the edges indicate probabilities of the environment
model $\mu$,
and the numbers in parenthesis indicate utilities of the
associated percepts.
In the first step, the environment selects the hidden state
that is unknown to the agent.
The agent then decides whether to go to the doctor.
If he does not go, he
may encounter a kitten which he can choose to pet or not.
SAEDT and SPEDT will disagree whether going
to the doctor is the best option in this scenario.
\iftechreport
\appendixref{app:examples} 
\else
\cite{ELH:2015tech}
\fi
contains the full calculations.
}
\label{fig:seq-toxo}
\end{figure}
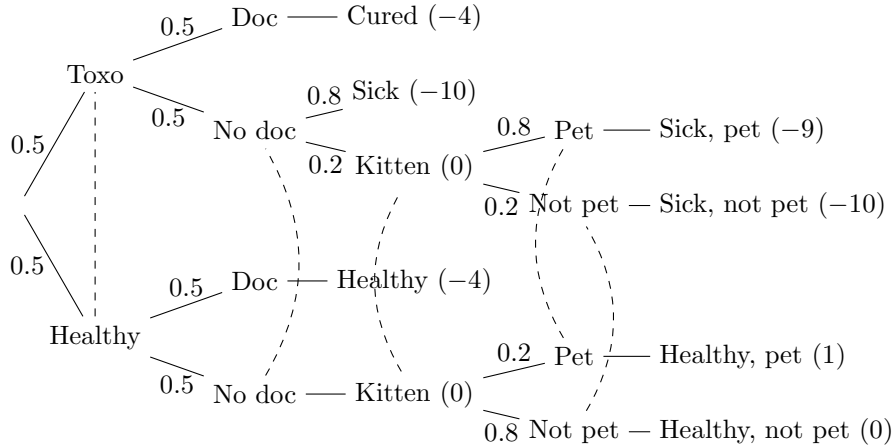

The observant reader may ask whether SPEDT could be enticed to
make some percepts unlikely by choosing improbable actions subsequent
to them. For example, could an SPEDT agent decide on a policy
of selecting highly improbable actions in case it rained to make
histories with rain less likely?
The answer is no, as most such
policies would not be time consistent.
If it does rain, the highly
improbable action would usually not the best one, and so the policy
would not be prescribed by \autoref{def:policy-evidential-value}.

\subsection{Sequential Causal Decision Theory}
\label{ssec:scdt}

In sequential causal decision theory
we ask what would happen if we causally intervened on the
node $a_t$ of the next action and fix it to $\pi(\ae_{<t})$
according to the policy $\pi$.
This is expressed by the notation $\doo(a_t := \pi(\ae_{<t}))$, 
or $\doo(\pi(\ae_{<t}))$ for short.

\begin{definition}[Sequential Causal Decision Theory]
\label{def:causal-value}
The \emph{causal value of a policy $\pi$ with lifetime $m$ in environment $\mu$
given history $\ae_{<t}a_t$} is
\begin{equation}\label{eq:SCDT}
   V^{\cau,\pi}_{\mu,m}(\ae_{<t}a_t)
:= \sum_{e_t \in \E} \mu(e_t \mid \ae_{<t}, \doo(a_t))
     \Big( u(e_t) + V^{\cau,\pi}_{\mu,m}(\ae_{<t}a_te_t) \Big)
\tag{SCDT}
\end{equation}
and $V^{\cau,\pi}_{\mu,m}(\ae_{<t}a_t) := 0$ for $t > m$.
\emph{Sequential Causal Decision Theory (SCDT)} prescribes
adopting an optimal and time consistent policy $\pi$ for $V^{\cau}_{\mu,m}$.
\end{definition}

For sequential evidential decision theory we discussed two versions 
\eqref{eq:SAEDT} and \eqref{eq:SPEDT}, based on next action and future
policy respectively.
In \ref{eq:SCDT} we perform
the causal intervention $\doo(a_t := \pi(\ae_{<t}))$.
We could also consider a policy-causal decision theory by replacing
$\mu(e_t \mid \ae_{<t}, \doo(a_t))$ with
$\mu(e_t \mid \ae_{<t}, \doo(\pi_{t:m}))$ in \autoref{def:causal-value}.
The causal intervention $\doo(\pi_{t:m}))$
of a policy $\pi$ between time step $t$ and time step $m$
is defined as as
\begin{equation}\label{eq:policy-causal}
   \mu(e_t \mid \ae_{<t}, \doo(\pi_{t:m}))
:= \!\!\!\sum_{e_{t+1:m}}\!\!\!
     \mu(e_{t:m} \mid \ae_{<t}, \doo(a_t := \pi(\ae_{<t}), \ldots, a_m := \pi(\ae_{<m}))).
\end{equation}
However, since the interventions are causal,
we do not get any extra evidence from the future interventions.
Therefore policy-causal decision theory is the same as
action-causal decision theory:

\begin{proposition}[Policy-Causal = Action-Causal]
\label{prop:policy-cdt}
For all histories $\ae_{<t} \in \H$ and all $e_t \in \E$,
we have
$
  \mu(e_t \mid \ae_{<t}, \doo(\pi_{t:m}))
= \mu(e_t \mid \ae_{<t}, \doo(\pi(\ae_{<t}))).
$
\end{proposition}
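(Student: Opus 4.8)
We want to show that marginalizing the future percepts out of a policy intervention leaves only the effect of pinning the next action. The plan is to unfold both sides from the definition \eqref{eq:def-do} of the $\doo$-operator together with the causal factorization \eqref{eq:mu-factorized} of $\mu$ along the graph of \autoref{fig:model-with-hidden-state}, and then to observe that the sum over $e_{t+1:m}$ in \eqref{eq:policy-causal} telescopes to $1$. Throughout one restricts to histories with $\mu(\ae_{<t})>0$, so that the conditionals are well-defined.

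First I would write out the post-intervention joint distribution explicitly. Applying \eqref{eq:def-do} simultaneously to the nodes $a_t,\dots,a_m$ deletes the factors $\mu(a_i\mid s,\ae_{<i})$ for $t\le i\le m$ and pins each such $a_i$ to the value $\pi(\ae_{<i})$; by \eqref{eq:mu-factorized}, what remains as a function of $s$, $\ae_{<t}$ and $e_{t:m}$ (all variables at times $>m$ marginalized out) is
\[
\mu(s)\,\Big(\textstyle\prod_{i=1}^{t-1}\mu(a_i\mid s,\ae_{<i})\,\mu(e_i\mid s,\ae_{<i}a_i)\Big)\,\prod_{i=t}^{m}\mu(e_i\mid s,\ae_{<i}a_i),
\]
where for $i\ge t$ the symbol $a_i$ denotes the pinned value $\pi(\ae_{<i})$. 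Marginalizing all variables at times $>m$ contributes the factor $1$ by consistency of the factorization \eqref{eq:mu-factorized} (each further conditional sums to one over its own argument), which is why only the displayed finite product matters.

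The heart of the argument is then the telescoping identity
\[
\sum_{e_{t+1:m}}\ \prod_{i=t}^{m}\mu(e_i\mid s,\ae_{<i}a_i)\ =\ \mu(e_t\mid s,\ae_{<t}a_t),
\]
obtained by peeling the sums off from the largest index inward: for every fixed prefix $\ae_{<m}a_m$ (and $a_m=\pi(\ae_{<m})$ is determined by that prefix), $\sum_{e_m}\mu(e_m\mid s,\ae_{<m}a_m)=1$ since it is a probability distribution over $e_m$; removing $e_m$, then $e_{m-1}$, down to $e_{t+1}$, leaves exactly the $i=t$ factor. Dividing by the post-intervention marginal probability of $\ae_{<t}$, which the same telescoping (now including $e_t$) identifies with the ordinary marginal $\mu(\ae_{<t})$, gives
\[
\mu(e_t\mid\ae_{<t},\doo(\pi_{t:m}))=\sum_{s\in\S}\mu(s\mid\ae_{<t})\,\mu(e_t\mid s,\ae_{<t}a_t),\qquad a_t=\pi(\ae_{<t}).
\]
Unfolding \eqref{eq:def-do} for the single intervention $\doo(a_t:=\pi(\ae_{<t}))$ and marginalizing $s$ shows the right-hand side is exactly $\mu(e_t\mid\ae_{<t},\doo(\pi(\ae_{<t})))$, which is the claim. (As a byproduct one sees that intervening on the future actions gives no extra evidence about the hidden state, since they are causally downstream of $\ae_{<t}$.)

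The main obstacle is bookkeeping rather than conceptual content: the pinned values $a_i=\pi(\ae_{<i})$ for $i>t$ depend on the very percepts $e_{t:i-1}$ being summed over, so the telescoping must be organized strictly from the last index inward and phrased as ``for every prefix, the conditional over $e_i$ sums to one''; and one must check that the infinite tail of the causal graph marginalizes out cleanly. Everything else is a direct substitution using \eqref{eq:def-do} and \eqref{eq:mu-factorized}.
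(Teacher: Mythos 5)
Your proof is correct and follows essentially the same route as the paper's: expand the policy-intervention via the $\doo$-operator and the causal factorization, telescope the sum over $e_{t+1:m}$ from the last index inward so that only $\mu(e_t\mid s,\ae_{<t}a_t)$ survives, observe that intervening on future actions leaves the posterior $\mu(s\mid\ae_{<t})$ unchanged, and identify the result with the single-action intervention via \eqref{eq:mu-do}. Your explicit care about the pinned values $a_i=\pi(\ae_{<i})$ depending on the summed-over percepts just makes precise a step the paper performs implicitly in its fourth equality.
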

We defer the proof to the end of this section.
The following two examples illustrate the difference between
SCDT and SAEDT/SPEDT in sequential settings.

\begin{example}[Newcomb with Precommitment]
\label{ex:Newcomb-with-precommitment}
In this variation to \hyperref[ex:Newcomb]{Newcomb's problem}
the agent first has the option to pay \$300,000 to sign a contract that
binds the agent to pay \$2000 in case of two-boxing.
An SAEDT or SPEDT agent knows that he will one-box anyways
and hence has no need for the contract.
An SCDT agent knows that she favors two-boxing,
but signs the contract only if this occurs before the prediction is made
(so it has a chance of causally affecting the prediction).
With the contract in place, one-boxing is the dominant action,
and thus the SCDT agent is predicted to one-box.
\end{example}

\begin{example}[Newcomb with Looking]
\label{ex:Newcomb-with-looking}
In this variation to \hyperref[ex:Newcomb]{Newcomb's problem}
the agent may look into the opaque box before making the decision
which box to take.
An SCDT agent is indifferent towards looking because
she will take both boxes anyways.
However, an SAEDT or SPEDT agent will avoid looking into the box,
because once the content is revealed he two-boxes.
\end{example}

\subsection{Expansion over the Hidden State}
\label{sec:hidden-states}

The difference between sequential versions of
EDT and CDT is how they update their prediction of a next percept $e_t$
(Definitions \ref{def:action-evidential-value},
\ref{def:policy-evidential-value} and \ref{def:causal-value}).
The following proposition expands the different beliefs in terms of
the hidden state.

\begin{proposition}\label{prop:conditional}
For all histories $\ae_{<t} a_t e_t \in \H$ the following holds for the
next-percept beliefs of SAEDT, SPEDT and SCDT respectively:
\begin{align}
   \mu(e_t \mid \ae_{<t} a_t)
&= \sum_{s \in \S} \mu(s \mid \ae_{<t} a_t) \mu(e_t \mid s, \ae_{<t} a_t)
\label{eq:mu-action-condition} \\
   \mu(e_t \mid \ae_{<t}, \pi_{t:m})
&= \sum_{s \in \S} \mu(s \mid \ae_{<t}, \pi_{t:m}) \mu(e_t \mid s, \ae_{<t}, \pi_{t:m})
\label{eq:mu-policy-condition} \\
   \mu(e_t \mid \ae_{<t}, \doo(a_t))
&= \sum_{s \in \S} \mu(s \mid \ae_{<t}) \mu(e_t \mid s, \ae_{<t} a_t)
\label{eq:mu-do}
\end{align}
\end{proposition}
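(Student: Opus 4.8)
The plan is to prove each of the three identities by conditioning on the hidden state $s$ and then simplifying using the independence structure encoded in the causal graph of \autoref{fig:model-with-hidden-state} together with the factorization \eqref{eq:mu-factorized}. The common template is: write $\mu(e_t \mid \cdot) = \sum_s \mu(e_t, s \mid \cdot) = \sum_s \mu(s \mid \cdot)\,\mu(e_t \mid s, \cdot)$ by the law of total probability and the chain rule, where ``$\cdot$'' is the relevant conditioning event; the only work is to show that in the causal case the second factor does not depend on whether we intervened.

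For \eqref{eq:mu-action-condition}, the conditioning event is the history $\ae_{<t}a_t$, and the identity is immediate from the law of total probability over $s$: $\mu(e_t\mid\ae_{<t}a_t)=\sum_s\mu(s\mid\ae_{<t}a_t)\mu(e_t\mid s,\ae_{<t}a_t)$, with no further simplification needed. For \eqref{eq:mu-policy-condition}, the conditioning event is $\ae_{<t}\cap\Pi_{t:m}$ (the history together with the policy being followed from $t$ to $m$, per \eqref{eq:policy-cond}); again the law of total probability over $s$ gives the claim directly once we agree that $\mu(e_t\mid s,\ae_{<t},\pi_{t:m})$ abbreviates $\mu(e_t\mid s,\ae_{<t}\cap\Pi_{t:m})$. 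These two cases are essentially definitional bookkeeping.

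The substantive case is \eqref{eq:mu-do}. Here the conditioning is the intervention $\doo(a_t)$ on top of the observed history $\ae_{<t}$. Starting from the definition \eqref{eq:def-do} of the $\doo$-operator applied to the causal graph of \autoref{fig:model-with-hidden-state}, intervening on the node $a_t$ deletes the factor $\mu(a_t\mid s,\ae_{<t})$ while leaving all other factors --- in particular $\mu(s)$ and $\mu(e_t\mid s,\ae_{<t}a_t)$ --- untouched. I would expand $\mu(e_t,s\mid\ae_{<t},\doo(a_t))$ as a ratio of marginalized post-intervention factored distributions, cancel the common factors $\prod_{i<t}\mu(a_i\mid s,\ae_{<i})\mu(e_i\mid s,\ae_{<i}a_i)$ between numerator and denominator, and read off that the post-intervention distribution over $s$ equals the prior $\mu(s)$ (since $s$ is a root with no parents, severing the arrow $s\to a_t$ leaves its marginal at $\mu(s)$, and conditioning on $\ae_{<t}$ does not affect the post-$\doo$ law in the way it would pre-intervention --- this is exactly where the causal versus evidential distinction bites), while the conditional of $e_t$ given $s$ and the (now fixed) history is just $\mu(e_t\mid s,\ae_{<t}a_t)$ as in the graph. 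Summing over $s$ then yields \eqref{eq:mu-do}.

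The main obstacle is purely notational rather than mathematical: the $\doo$-operator in \eqref{eq:def-do} is stated for a finite graph, whereas \autoref{fig:model-with-hidden-state} is infinite, so I would first note that only finitely many factors (those indexed by $i\le t$) are relevant for computing $\mu(e_t\mid\ae_{<t},\doo(a_t))$ and restrict attention to the finite sub-DAG on $s,\ae_{<t}a_t,e_t$, on which \eqref{eq:def-do} applies verbatim. Care is also needed to keep the normalization correct when conditioning a post-intervention distribution on $\ae_{<t}$; the cleanest route is to compute the joint $\mu(e_t,s\mid\ae_{<t},\doo(a_t))$ directly from the truncated factorization and divide by $\mu(\ae_{<t}\mid\doo(a_t))$, observing that the $\doo(a_t)$ in the denominator is vacuous because $a_t$ does not occur among the factors determining $\ae_{<t}$.
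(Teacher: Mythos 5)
Your overall strategy is the same as the paper's: the first two identities are the law of total probability over $s$ plus the chain rule (exactly the paper's ``take the joint with $s$ and split off $e_t$'' computation), and for the causal case you compute a ratio of post-intervention quantities, using that $\doo(a_t)$ is vacuous in the denominator --- the paper gets the same fact from Pearl's Rule~3 (deletion of actions). Your remarks about restricting to the finite sub-DAG and about normalization are sensible housekeeping.

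However, the narration of the causal case contains a step that, taken literally, produces the wrong answer. You say you would ``cancel the common factors $\prod_{i<t}\mu(a_i\mid s,\ae_{<i})\mu(e_i\mid s,\ae_{<i}a_i)$ between numerator and denominator, and read off that the post-intervention distribution over $s$ equals the prior $\mu(s)$,'' adding that ``conditioning on $\ae_{<t}$ does not affect the post-$\doo$ law.'' Both claims are off: the denominator $\mu(\ae_{<t}\mid\doo(a_t))=\mu(\ae_{<t})=\sum_{s'}\mu(s')\prod_{i<t}\mu(a_i\mid s',\ae_{<i})\mu(e_i\mid s',\ae_{<i}a_i)$ is a \emph{sum} over hidden states, so those factors cannot be cancelled term-by-term against the numerator's product for a fixed $s$; and conditioning on $\ae_{<t}$ very much does update the belief about $s$ even after the intervention, since all the pre-$t$ factors linking $s$ to the history survive the surgery on $a_t$. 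The intervention only blocks the \emph{additional} update on $a_t$ itself. Carried out as described, your manipulation would yield $\sum_s\mu(s)\mu(e_t\mid s,\ae_{<t}a_t)$, i.e.\ the one-shot CDT formula with the prior in place of the posterior, which contradicts \eqref{eq:mu-do}. The correct step is simply: the truncated factorization gives $\mu(s,\ae_{<t},e_t\mid\doo(a_t))=\mu(s,\ae_{<t})\,\mu(e_t\mid s,\ae_{<t}a_t)$, and dividing by $\mu(\ae_{<t})$ turns $\mu(s,\ae_{<t})$ into $\mu(s\mid\ae_{<t})$ --- no cancellation against the denominator, just the definition of conditional probability. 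With that substitution your argument goes through and coincides with the paper's.
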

\begin{proof}
For the action-evidential conditional we take the joint distribution with $s$,
and then split off $e_t$:
\begin{align*}
   \mu(e_t \mid \ae_{<t} a_t)
 = \frac{\sum_{s \in \S} \mu(s, \ae_{<t} a_t e_t)}{\mu(\ae_{<t} a_t)}
&= \frac{\sum_{s \in \S} \mu(s, \ae_{<t} a_t) \mu(e_t \mid s, \ae_{<t} a_t)}{\mu(\ae_{<t} a_t)} \\
&= \sum_{s \in \S} \mu(s \mid \ae_{<t} a_t) \mu(e_t \mid s, \ae_{<t} a_t)
\end{align*}

Similarly for the policy-evidential conditional:
\begingroup
\allowdisplaybreaks
\begin{align*}
   \mu(e_t \mid \ae_{<t}, \pi_{t:m})
 &= \frac{\sum_{s \in \S} \mu( s, \ae_{<t} \pi(\ae_{<t}) e_t, \pi_{t+1:m})}{\mu(\ae_{<t},\pi_{t:m})}\\
 &= \frac{\sum_{s \in \S} \mu(s, \ae_{<t} \pi(\ae_{<t}), \pi_{t+1:m}) \mu(e_t \mid s, \ae_{<t} \pi(\ae_{<t}), \pi_{t+1:m})}{\mu(\ae_{<t}, \pi_{t:m})} \\
 &= \frac{\sum_{s \in \S} \mu(s, \ae_{<t}, \pi_{t:m}) \mu(e_t \mid s, \ae_{<t} \pi(\ae_{<t}), \pi_{t+1:m})}{\mu(\ae_{<t}, \pi_{t:m})} \\
 &= \sum_{s \in \S} \mu(s \mid \ae_{<t}, \pi_{t:m}) \mu(e_t \mid s, \ae_{<t} \pi(\ae_{<t}), \pi_{t+1:m})\\
&= \sum_{s \in \S} \mu(s \mid \ae_{<t}, \pi_{t:m}) \mu(e_t \mid s, \ae_{<t}, \pi_{t:m})
\end{align*}
\endgroup

For the causal conditional we turn to
the rules of the $\doo$-operator~\cite[Thm.\ 3.4.1]{Pearl:2009}.
The first equality below holds by definition.
In the denominator of the second equality we can use Rule~3 (deletion of actions)
to remove $\doo(a_t)$ because
the $\doo$-operator removes all incoming edges to $a_t$ and
makes $a_t$ independent of the history $\ae_{<t}$.
In the numerator of the second equality we use the definition of $\doo$~\eqref{eq:def-do}:
\begingroup
\allowdisplaybreaks
\begin{align*}
   \mu(e_t \mid \ae_{<t}, \doo(a_t))
&= \frac{\mu(\ae_{<t}, e_t \mid \doo(a_t))}{\mu(\ae_{<t} \mid \doo(a_t))} \\
&= \frac{\sum_{s \in \S} \mu(s, \ae_{<t}) \mu(e_t \mid s, \ae_{<t} a_t)}{\mu(\ae_{<t})} \\
&= \sum_{s \in \S} \mu(s \mid \ae_{<t}) \mu(e_t \mid s, \ae_{<t} a_t)
\tag*{\qedhere}
\end{align*}
\endgroup
\end{proof}

\autoref{prop:conditional} shows that between SCDT and SAEDT, 
the difference in opinion about $e_t$ only depends on differences in their 
(acausal) \emph{posterior belief} $\mu(s \mid \ldots)$
about the hidden state.
SCDT and SAEDT thus become equivalent in scenarios
where there is only one hidden state $s^*$ with $\mu(s^*)=1$, as this
renders $\mu(s^* \mid \ae_{<t}) = \mu(s^* \mid \ae_{<t}a_t) = \mu(s^*) = 1$.
SPEDT, on the other hand, may disagree with the other two also 
after a hidden state has been fixed.

From a problem modeler's perspective, it is also instructive to consider 
the effect of moving uncertainty between the hidden state and environmental
stochasticity.
For two different environment models $\mu$ and $\mu'$,
the action and percept probabilities may be identical
(i.e., $\mu(a_t\mid \ae_{<t})=\mu'(a_t\mid \ae_{<t})$ and 
$\mu(e_t\mid \ae_{<t}a_t)=\mu'(e_t\mid \ae_{<t}a_t)$)
even though $\mu$ and $\mu'$ have non-isomorphic sets of hidden states
$\S$ and $\S '$. 
For example, given any $\mu$, an environment model $\mu'$ with a single hidden state $s_0$,
$\mu'(s_0)=1$,
may be constructed from $\mu$ by 
$\mu'(s_0,\ae_{<t}) := \sum_{s\in\S}\mu(s,\ae_{<t})$.
The transformation will not affect SAEDT and SPEDT, as the
definitions of their value functions
only depends on the `observable' action-
and percept-probabilities $\mu(a_t\mid\ae_{<t})$ and $\mu(e_t\mid\ae_{<t}a_t)$
which are preserved between $\mu$ and $\mu'$.
But the transformation will change SCDT's behavior in any $\mu$ where
SCDT disagrees with SAEDT, as SCDT and SAEDT are equivalent in $\mu'$ that
only has a single hidden state.
That SCDT depends on what uncertainty is captured by the hidden state
is unsurprising given that the hidden state has a special place in 
the causal structure of the problem.
Ultimately, the modeler must decide what
uncertainty to put in the hidden state, and what to attribute to 
environmental stochasticity. A general principle for how to do this is
still an open question~\cite{SF:2014DT}.

The value functions of SAEDT, SPEDT and SCDT can be rewritten in
the following \emph{iterative forms}, where the latter form uses 
\autoref{prop:conditional}.
Numbers above equality signs reference a justifying equation.
Let $a_i := \pi(\ae_{<i})$ for $i \geq t$:
\iftechreport
\begingroup
\allowdisplaybreaks
\begin{align}
   V^{\aevi,\pi}_{\mu,m}(\ae_{<t})
&= \sum_{k=t}^m\sum_{e_{t:k}} u(e_k) \prod_{i=t}^k \mu(e_i\mid\ae_{<i}a_i)
\label{eq:saedt-iterative} \\
&\stackrel{\eqref{eq:mu-action-condition}}{=}
   \sum_{k=t}^m\sum_{e_{t:k}} u(e_k) \prod_{i=t}^k
     \sum_{s \in \S} \mu(s \mid \ae_{<i} a_i) \mu(e_i \mid s, \ae_{<i}a_i)
\label{eq:saedt-iterative-explicit} \\
   V^{\pevi,\pi}_{\mu,m}(\ae_{<t})
&= \sum_{k=t}^m\sum_{e_{t:k}} u(e_k) \prod_{i=t}^k \mu(e_i\mid\ae_{<i}, \pi_{i:m})
\label{eq:spedt-iterative} \\
&\stackrel{\eqref{eq:mu-policy-condition}}{=} \sum_{k=t}^m\sum_{e_{t:k}} u(e_k) \prod_{i=t}^k
     \sum_{s \in \S} \mu(s \mid \ae_{<i} \pi_{i:m}) \mu(e_i \mid s, \ae_{<i},\pi_{i:m})
\label{eq:spedt-iterative-explicit}\\
   V^{\cau,\pi}_{\mu,m}(\ae_{<t})
&= \sum_{k=t}^m \sum_{e_{t:k}} u(e_k) \prod_{i=t}^k \mu(e_i \mid \ae_{<i}, \doo(a_i))
\label{eq:scdt-iterative} \\
&\stackrel{\eqref{eq:mu-do}}{=}
   \sum_{k=t}^m \sum_{e_{t:k}} u(e_i) \prod_{i=t}^k
     \sum_{s \in \S} \mu(s \mid \ae_{<i}) \mu(e_i \mid s, \ae_{<i} a_i)
\label{eq:scdt-iterative-explicit}
\end{align}
\endgroup
\else
\begingroup
\allowdisplaybreaks
\begin{align*}
   V^{\aevi,\pi}_{\mu,m}(\ae_{<t})
&= \sum_{k=t}^m\sum_{e_{t:k}} u(e_k) \prod_{i=t}^k \mu(e_i\mid\ae_{<i}a_i) \\
&\stackrel{\eqref{eq:mu-action-condition}}{=}
   \sum_{k=t}^m\sum_{e_{t:k}} u(e_k) \prod_{i=t}^k
     \sum_{s \in \S} \mu(s \mid \ae_{<i} a_i) \mu(e_i \mid s, \ae_{<i}a_i)\\
   V^{\pevi,\pi}_{\mu,m}(\ae_{<t})
&= \sum_{k=t}^m\sum_{e_{t:k}} u(e_k) \prod_{i=t}^k \mu(e_i\mid\ae_{<i}, \pi_{i:m})\\
&\stackrel{\eqref{eq:mu-policy-condition}}{=} \sum_{k=t}^m\sum_{e_{t:k}} u(e_k) \prod_{i=t}^k
     \sum_{s \in \S} \mu(s \mid \ae_{<i} \pi_{i:m}) \mu(e_i \mid s, \ae_{<i},\pi_{i:m})\\
   V^{\cau,\pi}_{\mu,m}(\ae_{<t})
&= \sum_{k=t}^m \sum_{e_{t:k}} u(e_k) \prod_{i=t}^k \mu(e_i \mid \ae_{<i}, \doo(a_i))\\
&\stackrel{\eqref{eq:mu-do}}{=}
   \sum_{k=t}^m \sum_{e_{t:k}} u(e_i) \prod_{i=t}^k
     \sum_{s \in \S} \mu(s \mid \ae_{<i}) \mu(e_i \mid s, \ae_{<i} a_i)
\end{align*}
\endgroup
\fi

\begin{proof}[Proof of \autoref{prop:policy-cdt}]
By the definition \eqref{eq:policy-causal} of $\doo(\pi_{t:m})$,
\begin{align*}
   \mu(e_t \mid \ae_{<t}, \doo(\pi_{t:m}))
& = \!\!\sum_{e_{t+1:m}}\!
     \mu(e_{t:m} \mid \ae_{<t}, \doo(a_t := \pi(\ae_{<t}), \ldots, a_m := \pi(\ae_{<m})))\\
& = \!\!\!\!\sum_{s, e_{t+1:m}}\!\!\!
     \mu(s\mid \ae_{<t})\mu(e_{t:m} \mid s, \ae_{<t}, \doo(\pi(\ae_{<t}),\ldots,\pi(\ae_{<m}))) \\
& \stackrel{\eqref{eq:def-do}}{=} 
\!\!\!\!\!\sum_{s,e_{t+1:m}} \!\!\mu(s\mid\ae_{<t}) \prod_{i=t}^m\mu(e_i\mid s, \ae_{<i}\pi(\ae_{<i}))\\ 
&=
    \sum_{s}\mu(s\mid\ae_{<t})\mu(e_t\mid s,\ae_{<t}\pi(\ae_{<t}))\\
&\stackrel{\eqref{eq:mu-do}}{=} \mu(e_t\mid \ae_{<t}, \doo(\pi(\ae_{<t})))
\end{align*}
The second equality follows from the equivalence
$P(\,\cdot\,) = \sum_s P(s)P(\,\cdot \mid s)$
applied to the distribution $\mu(\,\cdot \mid \ae_{<t}, \doo(a_t:=\pi(\ae_{<t}),\ldots,a_m:=\pi(\ae_{<m})))$,
and the third equality by (repeated) application of \eqref{eq:def-do}
to $\mu(\ae_{t:m}\mid s, \ae_{<t}) = \prod_{i=t}^m\mu(a_i\mid s,\ae_{<i})\mu(e_i\mid s, \ae_{<i}a_i)$.
\end{proof}

\section{Discussion}
\label{sec:discussion}

\begin{table}[t]
\begin{center}
\setlength{\tabcolsep}{1mm} 
\begin{tabular}{llll}
    & \ref{eq:SAEDT}          & \ref{eq:SPEDT}          & \ref{eq:SCDT} \\
\hyperref[ex:Newcomb]{Nwcb}
    & {\it 1-box}             & {\it 1-box}             & 2-box \\
\hyperref[ex:Newcomb-with-precommitment]{Nwcb w/ precommit}
    & {\it not commit, 1-box} & {\it not commit, 1-box} & commit, 1-box \\
\hyperref[ex:Newcomb-with-looking]{Nwcb w/ looking}
    & not look, 1-box         & not look, 1-box         & indifferent, 2-box \\
\hyperref[ex:toxoplasmosis]{Toxoplasmosis}
    & not pet                 & not pet                 & {\it pet} \\
\hyperref[ex:sequential-toxoplasmosis]{Seq.\ Toxoplasmosis}
    & doc, not pet            & no doc, not pet         & {\it doc, pet} \\
\end{tabular}
\end{center}
\caption{%
Decisions made by \ref{eq:SAEDT}, \ref{eq:SPEDT} and \ref{eq:SCDT}
in \autoref{ex:Newcomb}, \autoref{ex:toxoplasmosis},
\autoref{ex:sequential-toxoplasmosis},
\autoref{ex:Newcomb-with-precommitment}, and
\autoref{ex:Newcomb-with-looking}.
The latter three examples are sequential.
Winning moves are in italics;
in Newcomb with looking the winning move is to be indifferent and one-box.
Because Savage decision theory is dualistic,
these problems cannot be properly formalized in it.
}\label{tab:experiments}
\end{table}

Our paper is a first stab at the problem
of how physicalistic agents should make sequential decisions.
\ref{eq:CDT} and \ref{eq:EDT} provide an existing basis
for non-dualistic decision making,
which we extended to the sequential setting.
There are two natural ways for making sequential evidential decisions:
do I update my beliefs about the hidden state based on
my next action (`what I do next', \ref{eq:SAEDT}) or
my whole policy (`the kind of agent I am', \ref{eq:SPEDT})?
By \autoref{prop:policy-cdt},
this distinction does not exist for causal decision theory,
because with that theory
the agent does not consider its own actions evidence at all.
Therefore we have only one version of sequential causal decision theory,
\ref{eq:SCDT}.

To illustrate the differences between the decision theories,
we discussed three variants of Newcomb's problem
(\autoref{ex:Newcomb}, \autoref{ex:Newcomb-with-precommitment}, and
\autoref{ex:Newcomb-with-looking})
and two variants of the toxoplasmosis problem
(\autoref{ex:toxoplasmosis} and \autoref{ex:sequential-toxoplasmosis}).
\iftechreport
The formal specification of these examples
can be found in \appendixref{app:examples}.
\fi
We implemented \ref{eq:SCDT}, \ref{eq:SAEDT}, and \ref{eq:SPEDT};
\autoref{tab:experiments} shows their behavior on those examples.\footnote{%
Source code available at \url{http://jan.leike.name/}.
}

So which decision theory is better?
The answer to this question depends on
which decision you consider to be \emph{correct} (or even \emph{rational})
in each of the problems.
We posit that ultimately,
what counts is not whether your decision algorithm is theoretically pleasing,
but \emph{whether you win}.
Winning means getting the most utility.
If maximizing utility involves making crazy decisions,
then this is what you should do!

In Newcomb's problem, winning means one-boxing, because you end up richer.
In the toxoplasmosis problem, winning means petting the kitten,
because that yields more utility.
(S)CDT performs suboptimally in the Newcomb variations,
while the evidential decision theories perform suboptimally
in the toxoplasmosis variations.
This entails that neither CDT nor EDT are the final answer
to the problem of non-dualistic decision making.

Furthermore, neither CDT nor EDT agents are fully physicalistic:
they do not model the environment to contain themselves~\cite{SF:2014DT}.
For example, when playing a prisoner's dilemma
against your own source code~\cite{SF:2015UDT},
your opponent defects if and only if you defect.
This \emph{logical} connection between your action and your opponent's
is disregarded in the formalization based on
causal graphical models that we discuss here because it is not causal.

\emph{Timeless decision theory}~\cite{Yudkowsky:2010tdt} and
\emph{updateless decision theory}~\cite{SF:2014DT} are recent
attempts of more physicalistic  decision theories.
However, so far both have eluded explicit formalization~\cite{SF:2015UDT}.
We conclude that
finding a physicalistic decision theory remains an important open problem
in artificial intelligence research.

\paragraph{Acknowledgements.}
This work was in part supported by ARC grant \\ DP120100950.
It started at a MIRIxCanberra workshop sponsored by
the Machine Intelligence Research Institute.
Mayank Daswani and Daniel Filan contributed in the early stages of this paper
and we thank them for interesting discussions and helpful suggestions.
We also thank Nate Soares for useful feedback.

\bibliographystyle{alpha}
\bibliography{references}

\section*{List of Notation}
\label{app:notation}

\begin{longtable}{lp{0.85\textwidth}}
$:=$
	& defined to be equal \\
$\mathbb{N}$
	& the natural numbers, starting with $0$ \\
$\mathbb{R}$
	& the real numbers \\
\iftechreport
$\varepsilon$
	& a small positive real number \\
\fi
$\A$
	& the (finite) set of possible actions \\
$\E$
	& the (finite) set of possible percepts \\
$\S$
	& the set of hidden states \\
$u$
	& the utility function $u: \E \to [0, 1]$ \\
$a_t$
	& the action in time step $t$ \\
$e_t$
	& the percept in time step $t$ \\
$\ae_{<t}$
	& the first $t - 1$ interactions,
	$a_1 e_1 a_2 e_2 \ldots a_{t-1} e_{t-1}$ \\
$\ae_{i:k}$
	& the interactions between and including time step $i$ and time step $k$,
	$a_i e_i a_{i+1} e_{i+1} \ldots a_k e_k$ \\
$\ae_{1:\infty}$
	& a history of infinite length \\
$s$
	& a hidden state \\
$\pi$
	& a deterministic policy, i.e., a function $\pi: \H \to \A$ \\
$\pi_{t:k}$
	& policy $\pi$ restricted to the time steps between and including $t$ and $k$ \\
$V^{\aevi,\pi}_{\mu,m}$
	& action-evidential value of policy $\pi$ in environment $\mu$
	up to time step $m$, defined in \eqref{eq:SAEDT} \\
$V^{\pevi,\pi}_{\mu,m}$
	& policy-evidential value of policy $\pi$ in environment $\mu$
	up to time step $m$, defined in \eqref{eq:SPEDT} \\
$V^{\cau,\pi}_{\mu,m}$
	& causal value of policy $\pi$ in environment $\mu$
	up to time step $m$, defined in \eqref{eq:SCDT} \\
$k, i$
	& time steps, natural numbers \\
$t$
	& (current) time step \\
$m$
	& lifetime of the agent \\
$P_a$
	& distribution over percepts induced by action $a$ in \ref{eq:SDT} \\
$P$
	& distribution over percepts and actions in one-shot decision making \\
$\mu$
	& an accurate environment model \\
\end{longtable}

\iftechreport\clearpage
\appendix
\section{Examples}
\label{app:examples}

This section contains the formal calculations for
\autoref{ex:Newcomb}, \autoref{ex:toxoplasmosis},
\autoref{ex:sequential-toxoplasmosis},
\autoref{ex:Newcomb-with-precommitment}, and \autoref{ex:Newcomb-with-looking}.
These calculations are also available as Python code at
\url{http://jan.leike.name/}.

\exampledivider
\begin{example}[Newcomb's Problem]
\label{ex:Newcomb-formal}
This is a formalization of \autoref{ex:Newcomb}.
\begin{itemize}
\item $\S := \{ E, F \}$ where
	$E$ means the opaque box is empty and
	$F$ means the opaque box is full
\item $\A := \{ B_1, B_2 \}$ where
	$B_1$ means one-boxing and $B_2$ means two-boxing
\item $\E := \{ O_0, O_T, O_M, O_{MT} \}$
\item $u(O_0) := 0$, $u(O_T) :=$ 1,000,
	$u(O_M) :=$ 1,000,000, $u(O_{MT}) :=$ 1,001,000
\end{itemize}
Let $\varepsilon > 0$ be a small constant denoting the
accuracy of the predictor.
Because the environment has to assign non-zero probability to all actions,
$\varepsilon$ must be strictly positive.
The environment's distribution $\mu$ is defined as follows.
\begin{align*}
  \mu(E)=\mu(F)           &= 0.5
& \mu(O_T \mid E, B_2)    &= 1 \\
  \mu(B_1 \mid F)=\mu(B_2\mid E)         &= 1 - \varepsilon
& \mu(O_0 \mid E, B_1)    &= 1 \\
  \mu(B_1 \mid E) =\mu(B_2\mid F)        &= \varepsilon
& \mu(O_{MT} \mid F, B_2) &= 1 \\
  &\phantom{=}
& \mu(O_M \mid F, B_1)    &= 1
\end{align*}
By Bayes' rule, 
\[
  \mu(F \mid B_1)
= \frac{\mu(B_1 \mid F) \mu(F)}{\sum_{s\in \S} \mu(B_1\mid s) \mu(s)}
= \frac{\frac{1}{2}(1-\eps)}{\frac{1}{2}(1-\eps)+\frac{1}{2}\eps}
= (1-\eps)
\]
which also gives $\mu(E \mid B_1) = \eps$.
Similarly, $\mu(F \mid B_2) = \eps$
and $\mu(E \mid B_2) = 1 - \eps$.

For EDT we use equation \eqref{eq:EDT}
to compute the value of an action.
Since the percept $e_1$ is generated deterministically,
$\mu(e \mid s, a)$ only attains values $0$ or $1$.
We therefore omit it in the calculation below.
For action $B_1$ we get
\begin{align*}
   V^{\evi,B_1}_{\mu,1}
:= \sum_{e \in \E} \mu(e \mid B_1) u(e)
&= \sum_{e \in \E} \sum_{s \in \S} \mu(e \mid s, B_1) \mu(s \mid B_1) u(e) \\
&= \mu(E \mid B_1) u(O_0) + \mu(F \mid B_1) u(O_M) \\
&= \eps \cdot 0 + (1 - \eps) \cdot 1,000,000
\end{align*}
For action $B_2$ we get
\begin{align*}
   V^{\evi,B_2}_{\mu,1}
:= \sum_{e \in \E} \mu(e \mid B_2) u(e)
&= \sum_{e \in \E} \sum_{s \in \S} \mu(e \mid s, B_2) \mu(s \mid B_2) u(e) \\
&= \mu(E \mid B_2) u(O_T) + \mu(F \mid B_2) u(O_{MT}) \\
&= (1 - \eps) \cdot 1,000 + \eps \cdot 1,001,000 \\
&= 1,000 + \eps \cdot 1,000,000
\end{align*}
For $\eps < 49.95$ (just slightly better than random guessing),
we get that EDT favors $B_1$ over $B_2$:
\[
  V^{\evi,B_1}_{\mu,1}
= (1 - \eps) \cdot 1,000,000
> 500,500
> 1,000 + \eps \cdot 1,000,000
= V^{\evi,B_2}_{\mu,1}
\]

For CDT we use equation \eqref{eq:CDT}
to compute the value of an action.
For action $B_1$ we get
\begin{align*}
   V^{\cau,B_1}_{\mu,1}
:= \sum_{e \in \E} \mu(e \mid \doo(B_1)) u(e)
&= \sum_{e \in \E} \sum_{s \in \S} \mu(e \mid s, B_1) \mu(s) u(e) \\
&= \mu(E) u(O_0) + \mu(F) u(O_M) \\
&= 0.5 \cdot 0 + 0.5 \cdot 1,000,000
 = 500,000
\end{align*}
For action $B_2$ we get
\begin{align*}
   V^{\cau,B_2}_{\mu,1}
:= \sum_{e \in \E} \mu(e \mid \doo(B_2)) u(e)
&= \sum_{e \in \E} \sum_{s \in \S} \mu(e \mid s, B_2) \mu(s) u(e) \\
&= \mu(E) u(O_T) + \mu(F) u(O_{MT}) \\
&= 0.5 \cdot 1,000 + 0.5 \cdot 1,001,000
 = 500,500
\end{align*}
We get that CDT favors $B_2$ over $B_1$
regardless of the prediction accuracy $\eps$:
\[
  V^{\evi,B_1}_{\mu,1}
= 500,000
< 500,500
= V^{\evi,B_2}_{\mu,1}
\]
Moreover, CDT prefers $B_2$ \emph{regardless of the prior over $\mu(E)$}.
Two-boxing is the dominant action
because it yields \$1,000 more regardless of the hidden state.
\end{example}

\exampledivider
\begin{example}[Newcomb with Looking]
\label{ex:Newcomb-with-looking-formal}
This is a formalization of \autoref{ex:Newcomb-with-looking};
it extends \autoref{ex:Newcomb-formal}.

In the first time step, the agent gets to choose between
looking into the box ($L$) and not looking ($N$).
If the agent looks,
the subsequent percept will be $E$ or $F$,
depending on whether the box is empty ($E$) or full ($F$).
If the agent does not look,
the subsequent percept will be $0$.
All three of these percepts $E$, $F$, and $0$ have zero utility.

In the second time step the agent chooses
to one-box ($B_1$) or to two-box ($B_2$).
The payoffs are then based on the boxes' contents
as in \autoref{ex:Newcomb-formal}.
\begin{itemize}
\item $\S := \{ E, F \}$ where
	$E$ means the opaque box is empty and
	$F$ means the opaque box is full
\item $\A := \{ B_1, B_2 \}$ where
	$B_1$ means one-boxing and $B_2$ means two-boxing,
	$L := B_1$ means looking into the box and
	$N := B_2$ means not looking
	(the set of actions has to be the same for all time steps)
\item $\E := \{ E, F, 0, O_0, O_T, O_M, O_{MT} \}$
\item $u(O_0) := 0$, $u(O_T) :=$ 1,000,
	$u(O_M) :=$ 1,000,000, $u(O_{MT}) :=$ 1,001,000,
	$u(E) := u(F) := u(0) := 0$
\end{itemize}
Let $\varepsilon > 0$ be a small constant denoting the
prediction accuracy.
Because the environment has to assign non-zero probability to all actions,
$\varepsilon$ must be strictly positive.
The environment's distribution $\mu$ is defined as follows.
Question marks stand for single actions or percepts whose value is irrelevant.
\begin{align*}
  \mu(E) = \mu(F)         &= 0.5
& \mu(E \mid E, L)        &= 1 \\
  \mu(L \mid F) = \mu(L \mid E) &= 0.5
& \mu(0 \mid E, N)        &= 1 \\
  \mu(N \mid F) = \mu(N \mid E) &= 0.5
& \mu(F \mid F, L)        &= 1 \\
  \mu(B_1 \mid E, ??) &= \eps
& \mu(0 \mid F, N)        &= 1 \\
  \mu(B_1 \mid F, ??) &= 1 - \eps
& \mu(O_0 \mid E, ??B_1) &= 1 \\
  \mu(B_2 \mid E, ??) &= 1 - \eps
& \mu(O_T \mid E, ??B_2) &= 1 \\
  \mu(B_2 \mid F, ??) &= \eps
& \mu(O_M \mid F, ??B_1) &= 1 \\
  &\phantom{=}
& \mu(O_{MT} \mid F, ??B_2) &= 1
\end{align*}
The environment's game tree is given as follows, where dashed lines
connect states indistinguishable by the agent (also known as \emph{information sets}):
\begin{center}
\begin{tikzpicture}[
	every node/.style = {circle},
	grow = right,  
	level distance = 20mm,
	level 1/.style = {sibling distance=35mm},
	level 2/.style = {sibling distance=15mm}, 
	level 3/.style = {sibling distance=5mm}, 
	level 5/.style = {level distance = 10mm},
]
\node (init) {}
child {
	node (E) {E}
	child {
		node (EL) {L}
		child {
			node (ELE) {E}
			child {
				node (ELE1) {$B_1$}
				child { node[right] {0} }
			} child {
				node (ELE2) {$B_2$}
				child { node[right] {1,000} }
			}
		}
	} child {
		node (EN) {N}
		child {
			node (EN0) {0}
			child {
				node (EN01) {$B_1$}
				child { node[right] {0} }
			} child {
				node (EN02) {$B_2$}
				child { node[right] {1,000} }
			}
		}
	}
} child {
	node (F) {F}
	child {
		node (FL) {L}
		child {
			node (FLF) {F}
			child {
				node (FLF1) {$B_1$}
				child { node[right] {1,000,000} }
			} child {
				node (FLF2) {$B_2$}
				child { node[right] {1,001,000} }
			}
		}
	} child {
		node (FN) {N}
		child {
			node (FN0) {0}
			child {
				node (FN01) {$B_1$}
				child { node[right] {1,000,000} }
			} child {
				node (FN02) {$B_2$}
				child { node[right] {1,001,000} }
			}
		}
	}
};
\path (init) to node[left] {0.5} (E);
\path (init) to node[left] {0.5} (F);
\path (E) to node[above] {0.5} (EN);
\path (E) to node[below] {0.5} (EL);
\path (F) to node[above] {0.5} (FN);
\path (F) to node[below] {0.5} (FL);
\path (EN) to node[above] {1} (EN0);
\path (EL) to node[below] {1} (ELE);
\path (FN) to node[above] {1} (FN0);
\path (FL) to node[below] {1} (FLF);
\path (EN0) to node[below] {$\eps$}     (EN01);
\path (EN0) to node[above] {$1 - \eps$} (EN02);
\path (ELE) to node[below] {$\eps$}     (ELE1);
\path (ELE) to node[above] {$1 - \eps$} (ELE2);
\path (FN0) to node[below] {$1 - \eps$} (FN01);
\path (FN0) to node[above] {$\eps$}     (FN02);
\path (FLF) to node[below] {$1 - \eps$} (FLF1);
\path (FLF) to node[above] {$\eps$}     (FLF2);

\draw[dashed](E)to(F);
\draw[dashed, bend left](EN)to(FN);
\draw[dashed, bend left](EL)to(FL);
\draw[dashed, bend left](EN0)to(FN0);
\end{tikzpicture}
\end{center}

Using Bayes' rule, we calculate the following conditional probabilities
of the hidden state given a history $a_1$ or $a_1 e_1 a_2$:
\begin{align*}
0.5 &= \mu(E \mid L) = \mu(F \mid L) = \mu(E \mid N) = \mu(F \mid N) \\
1   &= \mu(E \mid LEB_1) = \mu(E \mid LEB_2) = \mu(F \mid LFB_1) = \mu(F \mid LFB_1) \\
\eps &= \mu(E \mid N0B_1) = \mu(F \mid N0B_2) \\
1 - \eps &= \mu(E \mid N0B_2) = \mu(F \mid N0B_1)
\end{align*}

Next, we write out the formula for \ref{eq:SAEDT} for a horizon of $2$
based on \eqref{eq:saedt-iterative-explicit}.
The first percept has no utility, which simplifies the equation.
\[
  V^{\aevi,\pi}_{\mu,2}
= \sum_{e_{1:2}} u(e_2)
    \left( \sum_{s \in \S} \mu(s \mid a_1) \mu(e_1 \mid s, a_1) \right)
    \left( \sum_{s \in \S} \mu(s \mid \ae_1 a_2) \mu(e_2 \mid s, \ae_1 a_2) \right)
\]
where $a_1 = \pi(\epsilon)$ and $a_2 = \pi(\ae_1)$.
The formula for \ref{eq:SPEDT} for a horizon of $2$
based on \eqref{eq:spedt-iterative-explicit} is as follows.
\[
  V^{\pevi,\pi}_{\mu,2}
= \sum_{e_{1:2}} u(e_2)
    \frac{\sum_{s \in \mathcal{S}} \mu(s a_1 e_1 \pi(a_1 e_1))}{\sum_{s \in \mathcal{S}} \sum_{e \in \mathcal{E}} \mu(s a_1 e \pi(a_1 e))}
    \sum_{s \in \S} \mu(s \mid \ae_1 \pi_2) \mu(e_2 \mid s, \ae_1 a_2)
\]
with $\pi_{1:2}$ and $\pi_2$ defined according to \eqref{eq:policy-cond}.
The formula for \ref{eq:SCDT} for a horizon of $2$
based on \eqref{eq:scdt-iterative-explicit} is as follows.
\[
  V^{\cau,\pi}_{\mu,2}
= \sum_{e_{1:2}} u(e_2)
    \left( \sum_{s \in \S} \mu(s) \mu(e_1 \mid s, a_1) \right)
    \left( \sum_{s \in \S} \mu(s \mid \ae_1) \mu(e_2 \mid s, \ae_1 a_2) \right)
\]
where $a_1 = \pi(\epsilon)$ and $a_2 = \pi(\ae_1)$.

There are six different possible policies:
\begin{itemize}
\item Look and always one-box (curious one-boxer)
\item Look and always two-box (curious two-boxer)
\item Don't look and one-box (incurious one-boxer)
\item Don't look and two-box (incurious two-boxer)
\item Look and one-box iff the box is empty (paradox-lover)
\item Look and one-box iff the box full (fatalistic)
\end{itemize}
Using the formulas above we can calculate their value.
We use $\varepsilon := 0.01$.
\begin{center}
\setlength{\tabcolsep}{2.5mm} 
\begin{tabular}{lccc}
& $V^{\aevi,\pi}_{\mu,2}$ & $V^{\pevi,\pi}_{\mu,2}$ & $V^{\cau,\pi}_{\mu,2}$ \\
Curious one-boxer & 500,000 & \textit{990,000} & 500,000 \\
Curious two-boxer & 501,000 & 11,000 & \textit{501,000} \\
Incurious one-boxer & \textit{990,000} & \textit{990,000} & 500,000 \\
Incurious two-boxer & 11,000 & 11,000 & \textit{501,000} \\
Paradox-lover & 500,500 & 500,500 & 500,500 \\
Fatalistic & 500,500 & 500,500 & 500,500
\end{tabular}
\end{center}
The highest values are displayed in italics.
The incurious one-boxer has the highest action-evidential value.
The curious one-boxer and the incurious one-boxer
have the highest policy-evidential value.
However, of these two policies only the incurious one-boxer
is a time-consistent policy for SPEDT,
because the agent wants to two-box after looking into the box:
\begin{align*}
V^{\aevi,B_1}_{\mu,1}(LF) &= V^{\pevi,B_1}_{\mu,1}(LF) = 1,000,000 \\
V^{\aevi,B_2}_{\mu,1}(LF) &= V^{\pevi,B_2}_{\mu,1}(LF) = 1,001,000 \\
V^{\aevi,B_1}_{\mu,1}(LE) &= V^{\pevi,B_1}_{\mu,1}(LE) = 0 \\
V^{\aevi,B_2}_{\mu,1}(LE) &= V^{\pevi,B_2}_{\mu,1}(LE) = 1,000
\end{align*}
The curious two-boxer and the incurious two-boxer
have the highest causal value,
and they are both time-consistent for SCDT.
\end{example}

\exampledivider
\begin{example}[Newcomb with Precommitment]
\label{ex:Newcomb-with-precommitment-formal}
This is a formalization of \autoref{ex:Newcomb-with-precommitment},
it extends \autoref{ex:Newcomb-formal}.

In the first time step, the agent gets to choose between
signing the contract ($S$) and not signing ($N$).
If the agent signs,
the subsequent percept will be $C$, which costs \$300,000,
and the prediction will be updated to one-boxing.
If the agent does not sign,
the subsequent percept will be $0$ with zero utility.

In the second time step the agent chooses
to one-box ($B_1$) or to two-box ($B_2$).
The payoffs are then based on the boxes' contents
as in \autoref{ex:Newcomb-formal}.
If the agent signed the contract and choses two boxes,
this incurs an additional cost of \$2,000.
\begin{itemize}
\item $\S := \{ E, F \}$ where
	$E$ means the opaque box is empty and
	$F$ means the opaque box is full
\item $\A := \{ B_1, B_2 \}$ where
	$B_1$ means one-boxing and $B_2$ means two-boxing,
	$S := B_1$ means signing the contract and
	$N := B_2$ means not signing
	(the set of actions has to be the same for all time steps)
\item $\E := \{ C, 0, O_0, O_T, O_{-T}, O_M, O_{MT}, O_{M-T} \}$
\item $u(O_0) := 0$, $u(O_T) := 1,000$,
	$u(O_{-T}) := -1,000$
	$u(O_M) := 1,000,000$, $u(O_{MT}) := 1,001,000$,
	$u(O_{M-T}) := 999,000$,
	$u(C) := -300,000$, $u(0) := 0$
\end{itemize}
Let $\varepsilon > 0$ be a small constant denoting the
prediction accuracy.
Because the environment has to assign non-zero probability to all actions,
$\varepsilon$ must be strictly positive.
The environment's distribution $\mu$ is defined as follows.
Question marks stand for single actions or percepts whose value is irrelevant.
\begin{align*}
  \mu(E) = \mu(F)         &= 0.5
& \mu(C \mid E, S)        &= 1 \\
  \mu(S \mid F) = \mu(S \mid E) &= 0.5
& \mu(0 \mid E, N)        &= 1 \\
  \mu(N \mid F) = \mu(N \mid E) &= 0.5
& \mu(C \mid F, S)        &= 1 \\
  \mu(B_1 \mid E, N0) &= \eps
& \mu(0 \mid F, N)        &= 1 \\
  \mu(B_1 \mid F, N0) &= 1 - \eps
& \mu(O_0 \mid E, N0B_1) &= 1 \\
  \mu(B_2 \mid E, N0) &= 1 - \eps
& \mu(O_T \mid E, N0B_2) &= 1 \\
  \mu(B_2 \mid F, N0) &= \eps
& \mu(O_M \mid F, N0B_1) &= 1 \\
  \mu(B_2 \mid\; ?, SC) &= \eps
& \mu(O_{MT} \mid F, N0B_2) &= 1 \\
  \mu(B_1 \mid\; ?, SC) &= 1 - \eps
& \mu(O_M \mid E, SCB_1) &= 1 \\
  &\phantom{=}
& \mu(O_{M-T} \mid E, SCB_2) &= 1
\end{align*}
The environment's game tree is given as follows:
\begin{center}
\begin{tikzpicture}[
	every node/.style = {circle},
	grow = right,  
	level distance = 20mm,
	level 1/.style = {sibling distance=35mm},
	level 2/.style = {sibling distance=15mm}, 
	level 3/.style = {sibling distance=5mm}, 
	level 5/.style = {level distance = 10mm},
]
\node (init) {}
child {
	node (E) {E}
	child {
		node (ES) {S}
		child {
			node (ESC) {C}
			child {
				node (ESC1) {$B_1$}
				child { node[right] {700,000} }
			} child {
				node (ESC2) {$B_2$}
				child { node[right] {699,000} }
			}
		}
	} child {
		node (EN) {N}
		child {
			node (EN0) {0}
			child {
				node (EN01) {$B_1$}
				child { node[right] {0} }
			} child {
				node (EN02) {$B_2$}
				child { node[right] {1,000} }
			}
		}
	}
} child {
	node (F) {F}
	child {
		node (FS) {S}
		child {
			node (FSC) {C}
			child {
				node (FSC1) {$B_1$}
				child { node[right] {700,000} }
			} child {
				node (FSC2) {$B_2$}
				child { node[right] {699,000} }
			}
		}
	} child {
		node (FN) {N}
		child {
			node (FN0) {0}
			child {
				node (FN01) {$B_1$}
				child { node[right] {1,000,000} }
			} child {
				node (FN02) {$B_2$}
				child { node[right] {1,001,000} }
			}
		}
	}
};
\path (init) to node[left] {0.5} (E);
\path (init) to node[left] {0.5} (F);
\path (E) to node[above] {0.5} (EN);
\path (E) to node[below] {0.5} (ES);
\path (F) to node[above] {0.5} (FN);
\path (F) to node[below] {0.5} (FS);
\path (EN) to node[above] {1} (EN0);
\path (ES) to node[below] {1} (ESC);
\path (FN) to node[above] {1} (FN0);
\path (FS) to node[below] {1} (FSC);
\path (EN0) to node[below] {$\eps$}     (EN01);
\path (EN0) to node[above] {$1 - \eps$} (EN02);
\path (ESC) to node[below] {$1 - \eps$} (ESC1);
\path (ESC) to node[above] {$\eps$}     (ESC2);
\path (FN0) to node[below] {$1 - \eps$} (FN01);
\path (FN0) to node[above] {$\eps$}     (FN02);
\path (FSC) to node[below] {$1 - \eps$} (FSC1);
\path (FSC) to node[above] {$\eps$}     (FSC2);

\draw[dashed](E)to(F);
\draw[dashed, bend left](EN)to(FN);
\draw[dashed, bend left](ES)to(FS);
\draw[dashed, bend left](EN0)to(FN0);
\draw[dashed, bend left](ESC)to(FSC);
\end{tikzpicture}
\end{center}
\end{example}

There are four different possible policies:
\begin{itemize}
\item Sign the contract and one-box (signing one-boxer)
\item Sign the contract and two-box (signing two-boxer)
\item Don't sign the contract and one-box (refusing one-boxer)
\item Don't sign the contract and two-box (refusing two-boxer)
\end{itemize}
Using the formulas from \autoref{ex:Newcomb-with-looking-formal}
we can calculate their value.
We use $\varepsilon := 0.01$.
\begin{center}
\setlength{\tabcolsep}{2.5mm} 
\begin{tabular}{lccc}
& $V^{\aevi,\pi}_{\mu,2}$ & $V^{\pevi,\pi}_{\mu,2}$ & $V^{\cau,\pi}_{\mu,2}$ \\
Signing one-boxer  & 700,000 & 700,00 & \textit{700,000} \\
Signing two-boxer  & 699,000 & 699,000 & 699,000 \\
Refusing one-boxer & \textit{990,000} & \textit{990,000} & 500,000 \\
Refusing two-boxer & 11,000 & 11,000 & 501,000 \\
\end{tabular}
\end{center}
The highest values are displayed in italics.
Both SAEDT and SPEDT refuse the contract:
the refusing one-boxer has the highest action-evidential and
the highest policy-evidential value.
SCDT signs the contract and then one-boxes:
the signing one-boxer has the highest causal value.

\exampledivider
\begin{example}[Toxoplasmosis]
\label{ex:toxoplasmosis-formal}
This is a formalization of \autoref{ex:toxoplasmosis}.
\begin{itemize}
\item $\S := \{ T, H \}$ where
	$T$ means having the toxoplasmosis parasite and
	$H$ means being healthy
\item $\A := \{ P, N \}$ where
	$P$ means petting and $N$ means not petting
\item $\E := \{ P \& T, N \& T, P \& H, N \& H \}$ where
	the percepts just reflect the action and hidden state
\item $u(P \& T) := -9$, $u(N \& T) := -10$,
	$u(P \& H) := 1$, $u(N \& H) := 0$ where
	petting gives a utility of $1$ and
	suffering from the parasite gives a utility of $-10$
\end{itemize}
The environment's distribution $\mu$ is defined as follows.
\begin{align*}
  \mu(T) = \mu(H)       &= 0.5
& \mu(P \& T \mid P, T) &= 1 \\
  \mu(P \mid T)         &= 0.8
& \mu(N \& T \mid N, T) &= 1 \\
  \mu(N \mid T)         &= 0.2
& \mu(P \& H \mid P, H) &= 1 \\
  \mu(P \mid H)         &= 0.2
& \mu(N \& H \mid N, H) &= 1 \\
  \mu(N \mid H)         &= 0.8
\end{align*}
Using Bayes' rule, we calculate the following conditional probabilities.
\begin{align*}
  \mu(T \mid P) &= 0.8
& \mu(H \mid P) &= 0.2
& \mu(T \mid N) &= 0.2
& \mu(H \mid N) &= 0.8
\end{align*}

We consider \ref{eq:EDT} first.
Since the percept $e_1$ is generated deterministically,
$\mu(e \mid s, a)$ only attains values $0$ or $1$.
We therefore omit it in the calculation below.
For action $P$ (petting) we get
\begin{align*}
   V^{\evi,P}_{\mu,1}
:= \sum_{e \in \E} \mu(e \mid P) u(e)
&= \sum_{e \in \E} \sum_{s \in \S} \mu(e \mid s, P) \mu(s \mid P) u(e) \\
&= \mu(T \mid P) u(T \& P) + \mu(H \mid P) u(P \& H) \\
&= 0.8 \cdot (-9) + 0.2 \cdot 1
 = -7
\end{align*}
For action $N$ (not petting) we get
\begin{align*}
   V^{\evi,N}_{\mu,1}
:= \sum_{e \in \E} \mu(e \mid N) u(e)
&= \sum_{e \in \E} \sum_{s \in \S} \mu(e \mid s, N) \mu(s \mid N) u(e) \\
&= \mu(T \mid N) u(T \& N) + \mu(H \mid N) u(H \& N) \\
&= 0.2 \cdot (-10) + 0.8 \cdot 0
 = -2
\end{align*}
Therefore we get that EDT favors $N$ over $P$:
\[
  V^{\evi,P}_{\mu,1}
= -7
< -2
= V^{\evi,N}_{\mu,1}
\]

For \ref{eq:CDT} we get for action $P$ (petting)
\begin{align*}
   V^{\cau,P}_{\mu,1}
:= \sum_{e \in \E} \mu(e \mid \doo(P)) u(e)
&= \sum_{e \in \E} \sum_{s \in \S} \mu(e \mid s, P) \mu(s) u(e) \\
&= \mu(T) u(T \& P) + \mu(N) u(N \& P) \\
&= 0.5 \cdot (-9) + 0.5 \cdot 1
 = -4
\end{align*}
For action $N$ (not petting) we get
\begin{align*}
   V^{\cau,N}_{\mu,1}
:= \sum_{e \in \E} \mu(e \mid \doo(N)) u(e)
&= \sum_{e \in \E} \sum_{s \in \S} \mu(e \mid s, N) \mu(s) u(e) \\
&= \mu(T) u(T \& N) + \mu(H) u(H \& N) \\
&= 0.5 \cdot (-10) + 0.5 \cdot 0
 = -5
\end{align*}
We get that CDT favors $P$ over $N$:
\[
  V^{\evi,P}_{\mu,1}
= -4
> -5
= V^{\evi,N}_{\mu,1}
\]
\end{example}

\exampledivider
\begin{example}[Sequential Toxoplasmosis]
\label{ex:sequential-toxoplasmosis-formal}
We here formalize a version of \autoref{ex:sequential-toxoplasmosis}.
First the agent chooses whether to go to the doctor. Going to the doctor
incurs a fee, but removes the risk of getting sick. 
Agents that do not go to the doctor have a chance of meeting a kitten.
If they meet it, they can choose to pet it or not; infected agents
are more likely to pet the kitten. 
The example is intended to elucidate the difference between SAEDT and
SPEDT, whose decisions we will calculate in detail.
We will not calculate the action of SCDT.

\begin{itemize}
  \item $\S := \{ T$(oxoplasmosis), $H$(ealthy)$\}$.
  \item $\A := \{ Y$(es), $N$(o)$\}$. 
    In this example, an action is taken twice. We use $Y_1$ and $Y_2$,
    and $N_1$ and $N_2$, to distinguish between the first and the second
    action.
  \item $\E := 
    \{C$(ured), $K$(itten), $S$(ick, not pet kitten), $s$(ick, pet kitten), 
    $P$(et, not sick), 0(neutral)$\}$
  \item  $u(C)=-4$, $u(K):=0$, $u(S):=-10$, $u(s):= -9$, $u(P):=1$, and $u(0)=0$.  
\end{itemize}
The environment's game tree is given as follows, where dashed lines
connect states indistinguishable by the agent.
\begin{center}
\begin{tikzpicture}[
	every node/.style = {circle},
	grow = right,  
	level distance = 20mm,
	level 1/.style = {sibling distance=35mm},
	level 2/.style = {sibling distance=15mm}, 
	level 3/.style = {sibling distance=10mm}, 
	level 5/.style = {level distance = 10mm},
]
\node (init) {}
child {
	node (I) {$H$}
	child {
		node (IE) {$N_1$}
		child {
			node (IEK) {$K$ ($0$)}
                        child {
				node (IEKQ) {$N_2$}
				child { node[right] {$0$ ($0$)} }
			}
			child {
				node (IEKP) {$Y_2$}
				child { node[right] {$P$ ($1$)} }
			}
		}
	} child {
		node (ID) {$Y_1$}
		child {
		 	node (ID0) {$C$ ($-4$)}
		}
	}
}
child {
	node (J) {$T$}
	child {
		node (JE) {$N_1$}
		      child {
			  node (JEK) {$K$ ($0$)}
				child {
                                    node(JEKQ) {$N_2$}
                                    child { node[right] {$S$ ($-10$)} }
                                 }
                                child {
                                    node(JEKP) {$Y_2$}
                                    child { node[right] {$s$ ($-9$)} }
                                    }
			} child {
				node (JES) {$S$ ($-10$)}
			}
	} child {
		node (JD) {$Y_1$}
		child {
			node (JD0) {$C$ ($-4$)}
		}
	}
};
\path (init) to node[left] {0.5} (J);
\path (init) to node[left] {0.5} (I);
\path (J) to node[above] {0.5} (JD);
\path (J) to node[below] {0.5} (JE);
\path (I) to node[above] {0.5} (ID);
\path (I) to node[below] {0.5} (IE);
\path (JE) to node[below] {$0.2$} (JEK);
\path (JE) to node[above] {$0.8$} (JES);
\path (JEK) to node[above] {$0.8$}      (JEKP);
\path (JEK) to node[below] {$0.2$}      (JEKQ);
\path (IEK) to node[above] {$0.2$}      (IEKP);
\path (IEK) to node[below] {$0.8$}      (IEKQ);
\draw[dashed](I)to(J);
\draw[dashed, bend right](IE)to(JE);
\draw[dashed, bend left](IEK)to(JEK);
\draw[dashed, bend left](IEKP)to(JEKP);
\draw[dashed, bend right](IEKQ)to(JEKQ);
\end{tikzpicture}
\end{center}

First, the environment chooses whether to infect the agent or not with
the parasite with probability $0.5$.
The agent then decides whether to see the doctor.
If the agent sees the doctor, this incurs a (utility) fee of $-4$,
but the agent will not be sick.
If the agent does not see the doctor,
there will be a kitten with probability
$0.2$ (or $1$) and the agent will pet it with probability $0.8$ (or $0.2$)
if the parasite is present (or not).
If there is no kitten, the next percept is $S$ or $0$
depending on whether the agent is infected or not.
The agent gets $-10$ utility
if infected and did not see the doctor,
and gets $+1$ utility for petting the kitten.

We want to compare the choices of SAEDT and SPEDT.
Their two-step value functions are
\[V^{\aevi,\pi}_{\mu, 2} 
  = \sum_{e_1}\mu(e_1\mid a_1)\left(u(e_1) + V^{\aevi,\pi}_{\mu,2}(a_1e_1)\right) \]
\[V^{\pevi,\pi}_{\mu, 2} 
  = \sum_{e_1}\mu(e_1\mid \pi_{1:2})\left(u(e_1) + V^{\pevi,\pi}_{\mu,2}(a_1e_1) \right)\]
where the second step value functions 
\[V^{\aevi,\pi}_{\mu,2}(a_1e_1) = V^{\pevi,\pi}_{\mu,2}(a_1e_1) 
  = \sum_{e_2}\mu(e_2\mid a_1e_1a_2)\cdot u(e_2)\]
are the same for both decision theories.
They only differ by assigning probability $\mu(e_1\mid a_1)$ and $\mu(e_1\mid \pi_{1:2})$
to the first percept, respectively.

Since not petting is always better than petting for evidential agents
(the evidence towards not having the disease weighs stronger than the extra utility),
the only policies that are potentially optimal and time consistent 
are $\pi_1 := N_1 N_2$ and $\pi_2 := Y_1$.

\paragraph{First percept.}
For $\pi_1$ the occurring action-evidential quantities $\mu(e_1\mid a_1)$ are
\begin{align*}
\mu(N_1) 
  &= \sum_{s\in\S}\mu(s, N_1)=\mu(T, N_1) + \mu(H, N_1) 
   = \frac{1}{4} + \frac{1}{4} = \frac{1}{2}\\
\mu(e_1 = S\mid N_1) &= \frac{\sum_{s\in\S}\mu(s, N_1S)}{\mu(N_1)} 
   = \frac{\mu(T, N_1S)}{\mu(N_1)} 
   = \frac{\frac{1}{2}\cdot \frac{1}{2}\cdot \frac{4}{5}}{\frac{1}{2}} 
   = \frac{2}{5}\\
\mu(e_1 = K\mid N_1) &= 1-\mu(S\mid N_1) = \frac{3}{5}
\intertext{and the occurring policy-evidential quantities $\mu(e_1\mid \pi_{1:2})$ are}
\mu(N_1N_2) &= \sum_{s,e_1,e_2} \mu(s, N_1e_1N_2e_2)\\
  &= \mu(T, N_1KN_2S) + \mu(T, N_1SN_20) +\mu(H, N_1KN_20)\\
  &= \frac{1}{100}+\frac{1}{10}+\frac{1}{5}=\frac{31}{100}\\
\mu(e_1 = K\mid N_1N_2) &= \frac{\sum_{s,e_2} \mu(s, N_1KN_2e_2)}{\mu(N_1, N_2)}\\
  &= \frac{\mu(T, N_1KN_2S) + \mu(H, N_1KN_20)}{\mu(N_1N_2)} 
   = \frac{\frac{1}{100}+\frac{1}{5}}{\frac{31}{100}} =\frac{21}{31}\\
\mu(e_1 = S\mid N_1N_2) &= 1-\mu(K\mid N_1N_2) = \frac{20}{31}
\intertext{The policy $\pi_2=\{Y_1\}$ always goes to the doctor for the treatment, and so}
\mu(e_1 = C \mid Y_1) &= 1
\end{align*}
for both AESDT and PESDT.

\paragraph{Second percept.}
With the policy $\pi_2$, the second percept is always empty. 
Under $\pi_1$, the only action
sequence that can reach the second percept is $N_1KN_2$
\begin{align*}
\mu(N_1KN_2) &= \sum_{s}\mu(s, N_1KN_2) = \mu(T, N_1KN_2) + \mu(H, N_1KN_2)\\ 
  &= \frac{1}{100} + \frac{1}{5} = \frac{21}{100}\\ 
\mu(e_2=S\mid N_1KN_2) &= \frac{\sum_{s}\mu(s, N_1KN_2S)}{\mu(N_1KN_2)} 
   = \frac{\mu(T, N_1KN_2S)}{\mu(N_1KN_2)} = \frac{\frac{1}{100}}{\frac{21}{100}} 
   = \frac{1}{21}.
\end{align*}

\paragraph{Value Functions.}
We start by evaluating the recursive definition from the second time step.
The second step value functions are 0 for $\pi_1$ and for the history 
$N_1S$ for $\pi_2$.
For the history $N_1K$, both SAEDT and PAEDT 
assign the following identical value to $\pi_2$:
\begin{align*}
V^{\aevi,\pi_1}_{\mu,2}(N_1K) &= V^{\pevi,\pi}_{\mu,2}(N_1K) 
   = \sum_{e_2}\mu(e_2\mid N_1KN_2)\cdot u(e_2)\\
  &= \mu(e_2=S\mid N_1KN_2)\cdot u(S) + \mu(e_2 = 0\mid N_1KN_2)\cdot u(0)\\
  &= \frac{1}{21}\cdot (-10) + \frac{20}{21}\cdot 0 = -\frac{10}{21} 
\end{align*}
The first step value functions now evaluates to:
\begin{align*}
V^{\aevi,\pi_1}_{\mu, 2} 
  &= \sum_{e_1}\mu(e_1\mid N_1)\cdot \left(u(e_1) + V^{\aevi,\pi_1}_{\mu, 2}(N_1e_1)\right)\\ 
  &= \mu(S\mid N_1) \cdot (u(S) + V^{\aevi,\pi_1}_{\mu,2}(N_1S)) \\
  &\quad~ + \mu(K\mid N_1)\cdot ( u(K) + V^{\aevi,\pi_1}_{\mu,2}(N_1K)) \\
  &= \frac{2}{5}\cdot (-10 + 0) + \frac{3}{5}\cdot ( 0 -\frac{10}{21})
   = -\frac{30}{7}\approx -4.3
\end{align*}

\begin{align*}
V^{\pevi,\pi_1}_{\mu, 2} 
  &= \sum_{e_1}\mu(e_1\mid N_1)\cdot \left(u(e_1) + V^{\pevi,\pi_1}_{\mu, 2}(N_1e_1)\right)\\ 
  &= \mu(S\mid N_1N_2)\cdot (u(S) + V^{\pevi,\pi_1}_{\mu, 2}(N_1S)) \\
  &\quad~ + \mu(K\mid N_1N_2)\cdot ( u(K) + V^{\pevi,\pi_1}_{\mu, 2}(N_1K))\\
  &= \frac{10}{31}\cdot (-10 + 0) + \frac{21}{31}\cdot ( 0 -\frac{10}{21}) 
   = -\frac{110}{31} \approx -3.5
\end{align*}
Meanwhile, the value of $\pi_2$ is 
\begin{align*}
   V^{\aevi,\pi_2}_{\mu, 2}
 = V^{\aevi,\pi_2}_{\mu, 2}
&= \sum_{e_1}\mu(e_1\mid N_1)\left(u(e_1) + V^{\aevi,\pi_2}_{\mu, 2}(N_1e_1)\right)\\
&= \mu(C\mid Y_1)(u(C) + V^{\aevi,\pi_2}_{\mu, 2}(Y_1C)) = 1\cdot(-4 + 0)=-4
\end{align*}
That is, 
$V^{\aevi,\pi_1}_{\mu, 2} < V^{\aevi,\pi_2}_{\mu, 2}=V^{\pevi,\pi_2}_{\mu, 2} < V^{\pevi,\pi_1}_{\mu, 2}$.
So SPEDT but not SAEDT prefers $\pi_1$ to $\pi_2$.
In other words, an SAEDT agent considers himself sufficiently likely to have the
parasite to adopt policy $\pi_2$ of seeing the doctor.
The SPEDT agent relies on the fact that
he would pet the cat in case he saw it, and takes that
as evidence of not being sick. Hence he will instead adopt policy $\pi_1$
of not seeing the doctor.
\end{example}

\fi 

\end{document}